\setlist[enumerate]{leftmargin=15pt, label=\roman*)}
\setlist[itemize]{leftmargin=10pt}
\author{%
    Antonin Joly \\
    IRISA, Rennes, France \\
  \texttt{antonin.joly@inria.fr}
    \and
  Nicolas Keriven\\
  CNRS, IRISA, Rennes, France \\
  \texttt{nicolas.keriven@cnrs.fr}
}
\date{}
\newcommand{\NK}[1]{{\color{orange}NK: #1}}
\newcommand{\AJ}[1]{{\color{blue}AJ: #1}}
\newtheorem{lemma}{Lemma}
\newtheorem{definition}{Definition}
\newtheorem{theorem}{Theorem}
\newtheorem{corollary}{Corollary}
\newtheorem{remark}{Remark}
\newtheorem{proposition}{Proposition}
\newtheorem{assumption}{Assumption}
\newtheorem{example}{Example}
\newtheorem*{theorem*}{Theorem}
\newcommand{\matrixpropag}{{S}}
\newcommand{\Lpropag}{{L}}
\newcommand{\pinormalized}{{\Pi}}
\newcommand{\newmatrixpropag}{{S^\textup{\normalfont MP}_c}}
\newcommand{\preservedspace}{\mathcal{R}}
\title{Graph Coarsening with Message-Passing Guarantees}
\begin{document}

\maketitle

Graph coarsening aims to reduce the size of a large graph while preserving some of its key properties, which has been used in many applications to reduce computational load and memory footprint. For instance, in graph machine learning, training Graph Neural Networks (GNNs) on coarsened graphs leads to drastic savings in time and memory. However, GNNs rely on the Message-Passing (MP) paradigm, and classical spectral preservation guarantees for graph coarsening do not directly lead to theoretical guarantees when performing naive message-passing on the coarsened graph.

In this work, we propose a new message-passing operation specific to coarsened graphs, which exhibit theoretical guarantees on the preservation of the propagated signal. Interestingly, and in a sharp departure from previous proposals, this operation on coarsened graphs is oriented, even when the original graph is undirected. We conduct node classification tasks on synthetic and real data and observe improved results compared to performing naive message-passing on the coarsened graph.

\section{Introduction}

In recent years, several applications in data science and machine learning have produced large-scale \emph{graph} data \cite{Hu2020, Bronstein2021}. For instance, online social networks \cite{Ediger2010} or recommender systems \cite{Wang2018} routinely produce graphs with millions of nodes or more. To handle such massive graphs, researchers have developed general-purpose \emph{graph reduction} methods \cite{bravo2019unifying}, such as 
\textbf{graph coarsening} \cite{loukas2019graph, Chen2022}. It consists in producing a small graph from a large graph while retaining some of its key properties, 
and starts to play an increasingly prominent role in machine learning applications \cite{Chen2022}.

\paragraph{Graph Neural Networks.} Machine Learning on graphs is now largely done by Graph Neural Networks (GNNs) \cite{scarselli2008graph, kipf2016GCN, Bronstein2021}. GNNs are deep architectures on graph that rely on the \textbf{Message-Passing} paradigm \cite{Gilmer2017}: at each layer, the representation $H^l_i \in \mathbb{R}^{d_l}$ of each node $1\leq i \leq N$, is updated by \emph{aggregating} and \emph{transforming} the representations of its neighbours at the previous layer $\{H^{l-1}_j\}_{j \in \mathcal{N}(i)}$, where $\mathcal{N}(i)$ is the neighborhood of $i$. In most examples, this aggregation can be represented as a \emph{multiplication} of the node representation matrix $H^{l-1} \in \mathbb{R}^{N \times d_{l-1}}$ by a \emph{propagation matrix} $\matrixpropag \in \mathbb{R}^{N \times N}$ related to the graph structure, followed by a fully connected transformation. That is, starting with initial node features $H^0$, the GNN $\Phi_\theta$ outputs after $k$ layers:
\begin{equation}\label{eq:gnn}
    H^{l} = \sigma\left( SH^{l-1}\theta_l \right), \quad \Phi_\theta(H^0, S) = H^k\, ,
\end{equation} 
where $\sigma$ is an activation function applied element-wise (often ReLU), $\theta_l \in \mathbb{R}^{d_{l-1} \times d_l}$ are learned parameters and $\theta = \{\theta_1,\ldots, \theta_k\}$. We emphasize here the dependency of the GNN on the propagation matrix $S$. Classical choices include mean aggregation $S=D^{-1}A$ or the normalized adjacency $S = D^{-\frac12} A D^{-\frac12}$, with $A$ the adjacency matrix of the graph and $D$ the diagonal matrix of degrees. When adding self-loops to $A$, the latter corresponds for instance to the classical GCNconv layer \cite{kipf2016GCN}.

An interesting example is the Simplified Graph Convolution (SGC) model \cite{wu2019SGC}, which consists in removing all the non-linearity 
($\sigma = id$ the identity function). Surprisingly, the authors of \cite{wu2019SGC} have shown that SGC reaches quite good performances when compared to non-linear architectures and due to its simplicity, SGC has been extensively employed in theoretical analyses of GNNs \cite{Zhu2021, Keriven2022a}.

\paragraph{Graph coarsening and GNNs.} 
In this paper, we consider graph coarsening as a \textbf{preprocessing} step to downstream tasks \cite{dickens2024graph, Huang2021}: indeed, applying GNNs on coarsened graphs leads to drastic savings in time and memory, both during training and inference. Additionally, large graphs may be too big to fit on GPUs, and mini-batching graph nodes is known to be a difficult graph sampling problem \cite{Gasteiger2022}, which may no longer be required on a coarsened graph. A primary question is then the following: \textbf{is training a GNN on a coarsened graph provably close to training it on the original graph?} To examine this, one must study the interaction between graph coarsening and message-passing.

There are many ways of measuring the quality of graph coarsening algorithms, following different criteria \cite{dhillon2007weighted, loukas2019graph, Chen2022}. A classical objective is the preservation of \emph{spectral properties} of the graph Laplacian, which gave birth to different algorithms \cite{loukas2019graph, chen2023gromov, bravo2019unifying, jin2020graph, loukas2018spectrally}. Loukas \cite{loukas2019graph} materializes this by the so-called \emph{Restricted Spectral Approximation} (RSA, see Sec.~\ref{sec:background}) property: it roughly states that the frequency content of a certain subspace of graph signals is approximately preserved by the coarsening, or intuitively, that the coarsening is well-aligned with the low-frequencies of the Laplacian.
Surprisingly, the RSA \emph{does not generally lead to guarantees on the message-passing process} at the core of GNNs, even for very simple signals. That is, simply performing message-passing on the coarsened graph using $S_c$, the naive propagation matrix corresponding to $S$ on the coarsened graph (e.g. normalized adjacency of the coarsened graph  when $S$ is the normalized adjacency of the original one) does \emph{not} guarantee that the outputs of the GNN on the coarsened graph and the original graph will be close, even with high-quality RSA.

\paragraph{Contribution.} In this paper, we address this problem by defining a \textbf{new propagation matrix} $\newmatrixpropag$ \emph{specific to coarsened graphs}, which translate the RSA bound to message-passing guarantees: we show in Sec.~\ref{sec:gnn} that training a GNN on the coarsened graph using $\newmatrixpropag$ is provably close to training it on the original graph. The proposed matrix $\newmatrixpropag$ can be computed for any given coarsening and \textbf{is not specific to the coarsening algorithm used to produce it}\footnote{Note however that $\newmatrixpropag$ must be computed \emph{during the coarsening process} and included as an output of the coarsening algorithm, before eventually discarding the original graph.}, as long as it produces coarsenings with RSA guarantees. Interestingly, our proposed matrix $\newmatrixpropag$ is \emph{not symmetric} in general even when $S$ is, meaning that our guarantees are obtained by performing \emph{oriented message-passing} on the coarsened graph, even when the original graph is undirected. To our knowledge, the only previous work to propose a new propagation matrix for coarsened graphs is \cite{Huang2021}, where the authors obtain guarantees for a specific GNN model (APPNP \cite{Klicpera2019b}), which is quite different from generic message-passing.

\paragraph{Related Work.} 
Graph Coarsening originates from the multigrid-literature \cite{ruge1987algebraic}, and is part of a family of methods commonly referred to as \emph{graph reduction}, which includes graph sampling \cite{Hu2013}, which consists in sampling nodes to extract a subgraph; graph sparsification  \cite{spielman2011spectral, allen2015spectral, lee2018constructing}, that focuses on eliminating edges; or more recently graph distillation \cite{jin2021graph, zheng2024structure, jin2022condensing}, which extends some of these principles by authorizing additional informations inspired by dataset distillation \cite{wang2018dataset}.

Some of the first coarsening algorithms were linked to the graph clustering community, e.g. \cite{defferrard2016convolutional} which used recursively the Graclus algorithm \cite{dhillon2007weighted} algorithm itself built on Metis \cite{karypis1998fast}. Linear algebra technics such as the Kron reduction were also employed \cite{loukas2019graph} \cite{dorfler2012kron}. In \cite{loukas2019graph}, the author presents a greedy algorithm that recursively merge nodes by optimizing some cost, with the purpose of preserving spectral properties of the coarsened Laplacian. This is the approach we use in our experiments (Sec.~\ref{sec:expe}). It was followed by several similar methods with the same spectral criterion \cite{chen2023gromov, bravo2019unifying, jin2020graph, loukas2018spectrally}. Since modern graph often includes node features, other approaches proposed to take them into account in the coarsening process, often by learning the coarsening with specific regularized loss \cite{kumar2023featured, ma2021unsupervised}. Closer to this work, \cite{dickens2024graph} proposes an optimization process to explicitely preserve the propagated features, however with no theoretical guarantees and only one step of message-passing. While these works often seek to preserve a fixed number of node features as in e.g. \cite{kumar2023featured}), the RSA guarantees \cite{loukas2019graph} leveraged in this paper are \emph{uniform} over a whole subspace: this stronger property is required to provide guarantees for GNNs with several layers.

Graph coarsening has been intertwined with GNNs in different ways. It can serve as graph \emph{pooling} \cite{ying2018Diffpool} within the GNN itself, with the aim of mimicking the pooling process in deep convolutional models on images.
Modern graph pooling is often data-driven and fully differentiable, such as Diffpool \cite{ying2018Diffpool}, SAGPool \cite{lee2019self}, and DMoN \cite{tsitsulin2023graph}. GNNs can also be trained to produce data-driven coarsenings, e.g. GOREN \cite{cai2021graph} which proposes to learn new edge weights with a GNN.
Finally, and this is the framework we consider here, graph coarsening can be a \emph{preprocessing} step with the aim of saving time and memory during training and inference \cite{Huang2021}. Here few works derive theoretical guarantees for GNNs and message-passing, beyond the APPNP architecture examined in \cite{Huang2021}. To our knowledge, the proposed $\newmatrixpropag$ is the first to yield such guarantees.

\paragraph{Outline.} We start with some preliminary material on graph coarsening and spectral preservation in Sec.~\ref{sec:background}. We then present our main contribution in Sec.~\ref{sec:contrib}: a new propagation matrix on coarsened graphs that leads to guarantees for message-passing. 
As is often the case in GNN theoretical analysis, our results mostly hold for the linear SGC model, however we still outline sufficient assumptions that would be required to apply our results to general GNNs, which represent a major path for future work. In Sec.~\ref{sec:expe} we test the proposed propagation matrix on real and synthetic data, and show how it leads to improved results compared to previous works. Proofs are deferred to App.~\ref{app:proof}. 

\paragraph{Notations.} For a matrix $Q \in \mathbb{R}^{n \times N}$, its pseudo-inverse $Q^+\in \mathbb{R}^{N \times n}$ is obtained by replacing its nonzero singular values by their inverse and transposing. For a symmetric positive semi-definite (p.s.d.) matrix $L \in \mathbb{R}^{N \times N}$, we define $L^{\frac12}$ by replacing its eigenvalues by their square roots, and $L^{-\frac12} = (L^+)^{\frac12}$. For $x \in \mathbb{R}^N$ we denote by $\|x\|_L = \sqrt{x^\top L x}$ the Mahalanobis semi-norm associated to $L$. For a matrix $P \in \mathbb{R}^{N \times N}$, we denote by $\|P\| = \max_{\|x\|=1} \|Px\|$ the operator norm of $P$, and $\|P\|_L = \|L^{\frac12} P L^{-\frac12}\|$. For a subspace $R$, we say that a matrix $P$ is $R$-preserving if $x \in R$ implies $Px \in R$. 
Finally, for a matrix $X \in \mathbb{R}^{N \times d}$, we denote its columns by $X_{:,i}$, and define $\|X\|_{:,L} = \sum_i \|X_{:,i}\|_L$. 

\section{Background on Graph Coarsening}\label{sec:background}

We mostly adopt the framework of Loukas \cite{loukas2019graph}, with some generalizations.
A graph $G$ with $N$ nodes is described by its weighted adjacency matrix $A \in \mathbb{R}^{N\times N}$.
We denote by $\Lpropag  
\in \mathbb{R}^{N\times N} $ a notion of symmetric p.s.d. Laplacian of the graph
: classical choices include the combinatorial Laplacian $L = D - A$ with $D = D(A) := \text{diag}(A1_n)$ the diagonal matrix of the degrees, or the symmetric normalized Laplacian $L = I_N - D^{-\frac12} A D^{-\frac12}$. We denote by $\lambda_{\max}, \lambda_{\min}$ respectively the largest and smallest non-zero eigenvalue of $L$.

\paragraph{Coarsening matrix.} A coarsening algorithm takes a graph $G$ with $N$ nodes, and produces a coarsened graph $G_c$ with $n < N$ nodes. Intuitively, nodes in $G$ are grouped in ``super-nodes'' in $G_c$ (Fig.~\ref{fig:coarsening_example}), with some weights to outline their relative importance. This mapping can be represented \emph{via} a \textbf{coarsening matrix} $Q \in \mathbb{R}^{n\times N}$:
$$Q =
\begin{cases}
    Q_{ki} > 0 & \text{if the $i$-th node of $G$ is mapped to the $k$-th super-node of $G_c$} \\
    Q_{ki} = 0 & \text{otherwise}
\end{cases}
$$
The \textbf{lifting matrix} is the pseudo-inverse of the coarsening matrix $Q^+$, and plays the role of inverse mapping from the coarsened graph to the original one. The \textbf{coarsening ratio} is defined as $r$ : $r = 1- \frac{n}{N}$. That is, the higher $r$ is, the \emph{more} coarsened the graph is.  

A coarsening is said to be \textbf{well-mapped} if nodes in $G$ are mapped to a unique node in $G_c$, that is, if $Q$ has exactly one non-zero value per column. Moreover, it is \textbf{surjective} if at least one node is mapped to each super node: $\sum_i Q_{ki}>0$ for all $k$. In this case, $QQ^\top$ is invertible diagonal and $Q^+ = Q^\top(QQ^\top)^{-1}$. Moreover, such a coarsening is said to be \textbf{uniform} when mapping weights are constant for each super-nodes and sum to one: $Q_{ki} = 1/n_k$ for all $Q_{ki} > 0$, where $n_k$ is the number of nodes mapped to the super-node $k$. In this case the lifting matrix is particularly simple: $Q^+ \in \{0,1\}^{N \times n}$ (Fig.~\ref{fig:d}). For simplicity, following the majority of the literature \cite{loukas2019graph}, in this paper we consider only well-mapped surjective coarsenings (but not necessarily uniform).

\begin{remark}
Non-well-mapped coarsenings may appear in the literature, e.g. when learning the matrix $Q$ \emph{via} a gradient-based optimization algorithm such as Diffpool \cite{ying2018Diffpool}. However, these methods often include \emph{regularization penalties} to favor well-mapped coarsenings. 
\end{remark} 

\paragraph{Restricted Spectral Approximation.} A large part of the graph coarsening literature measures the quality of a coarsening by quantifying the modification of the spectral properties of the graph, often represented by the graph Laplacian $\Lpropag$. In \cite{loukas2019graph}, this is done by establishing a near-isometry property for graph signals with respect to the norm $\|\cdot\|_L$, which can be interpreted as a measure of the smoothness of a signal across the graph edges. Given a signal $x \in \mathbb{R}^N$ over the nodes of $G$, we define the coarsened signal $x_c\in \mathbb{R}^{n}$ and the re-lifted signal $\tilde x \in \mathbb{R}^{N}$ by
\begin{equation}\label{eq:xc}
    x_c = Q x, \qquad \tilde x = Q^+ x_c = \Pi x
\end{equation}
where $\Pi = Q^+ Q$. Loukas \cite{loukas2019graph} then introduces the notion of \emph{Restricted Spectral Approximation} (RSA) of a coarsening algorithm, which measures how much the projection $\Pi$ is close to the identity for a class of signals. Since $\Pi$ is at most of rank $n <N$, this cannot be true for all signals, but only for a restricted subspace $\preservedspace \subset \mathbb{R}^N$. With this in mind, the \emph{RSA constant} is defined as follows.

\def\RSAconst{\epsilon_{L,Q,\preservedspace}}
\begin{definition}[Restricted Spectral Approximation constant]
    Consider a subspace $ \preservedspace \subset \mathbb{R}^N$, a Laplacian $\Lpropag$, a coarsening matrix $Q$ and its corresponding projection operator $\Pi = Q^+Q$.     The \emph{RSA constant} $\epsilon_{L,Q,\preservedspace}$ is defined as
    \begin{equation}\label{eq:epsilon} 
        \RSAconst = \sup_{x\in \preservedspace, \|x\|_L =1} \lVert x - \Pi x  \rVert_{\Lpropag}
    \end{equation}
    
\end{definition}

In other words, the RSA constant measures how much signals in $\preservedspace$ are preserved by the coarsening-lifting operation, with respect to the norm $\|\cdot\|_\Lpropag$.
Given some $\preservedspace$ and Laplacian $L$, the goal of a coarsening algorithm is then to produce a coarsening $Q$ \emph{with the smallest RSA constant possible}. While the ``best'' coarsening $\arg\min_Q \RSAconst$ is generally computationally unreachable, there are many possible heuristic algorithms, often based on greedy merging of nodes. In App.~\ref{app:loukasalg}, we give an example of such an algorithm, adapted from \cite{loukas2019graph}. In practice, $\preservedspace$ is often chosen a the subspace spanned by the first eigenvectors of $L$ ordered by increasing eigenvalue: intuitively, coarsening the graph and merging nodes is more likely to preserve the low-frequencies of the Laplacian.

While $\RSAconst \ll 1$ is required to obtain meaningful guarantees, we remark that $\RSAconst$ is not necessarily finite. Indeed, as $\|\cdot\|_L$ may only be a semi-norm, its unit ball is not necessarily compact. It is nevertheless finite in the following case.

\begin{proposition}\label{prop:rsafinite}
When $\Pi$ is $\textup{ker}(L)$-preserving, it holds that $\RSAconst \leq \sqrt{\lambda_{\max}/\lambda_{\min}}$.
\end{proposition}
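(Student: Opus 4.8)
The plan is to prove the stronger statement that $\|x - \Pi x\|_L \le \sqrt{\lambda_{\max}/\lambda_{\min}}\,\|x\|_L$ holds for every $x \in \mathbb{R}^N$; since $\preservedspace \subseteq \mathbb{R}^N$, taking the supremum over $x \in \preservedspace$ with $\|x\|_L = 1$ then yields the claimed bound on $\RSAconst$. The central subtlety is that $\|\cdot\|_L$ is only a \emph{semi}-norm, so the kernel of $L$ must be treated with care. I would therefore start from the orthogonal decomposition $x = x_0 + x_1$, where $x_0$ is the Euclidean-orthogonal projection of $x$ onto $\ker(L)$ and $x_1 \in \ker(L)^\perp = \mathrm{range}(L)$ (the two coincide because $L$ is symmetric).

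The role of the $\ker(L)$-preserving hypothesis is to discard the kernel component entirely. Since $x_0 \in \ker(L)$ and $\Pi$ is $\ker(L)$-preserving, $\Pi x_0 \in \ker(L)$, hence $(I - \Pi)x_0 \in \ker(L)$ because $\ker(L)$ is a subspace. Consequently $L(I-\Pi)x_0 = 0$, which makes every term of $\|(I-\Pi)x\|_L^2 = ((I-\Pi)x)^\top L (I-\Pi)x$ that involves $(I-\Pi)x_0$ vanish (both the pure $x_0$ term and, by symmetry of $L$, the cross term). The same reasoning applied to $x$ itself gives $\|x\|_L = \|x_1\|_L$. Thus the problem reduces to bounding $\|(I-\Pi)x_1\|_L$ in terms of $\|x_1\|_L$, with $x_1 \in \mathrm{range}(L)$.

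The final step chains three elementary spectral estimates. First, since $L \preceq \lambda_{\max} I$, the upper bound $\|(I-\Pi)x_1\|_L \le \sqrt{\lambda_{\max}}\,\|(I-\Pi)x_1\|$ holds in Euclidean norm. Second, for a well-mapped surjective coarsening $\Pi = Q^+Q = Q^\top(QQ^\top)^{-1}Q$ is symmetric and idempotent, i.e.\ an orthogonal projection, so $I-\Pi$ is one too and $\|(I-\Pi)x_1\| \le \|x_1\|$. Third, because $x_1 \in \mathrm{range}(L)$, restricting $L$ to this subspace gives the lower bound $\|x_1\|_L^2 = x_1^\top L x_1 \ge \lambda_{\min}\|x_1\|^2$, that is $\|x_1\| \le \|x_1\|_L/\sqrt{\lambda_{\min}}$. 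Combining the three yields $\|(I-\Pi)x\|_L = \|(I-\Pi)x_1\|_L \le \sqrt{\lambda_{\max}}\,\|x_1\| \le \sqrt{\lambda_{\max}/\lambda_{\min}}\,\|x_1\|_L = \sqrt{\lambda_{\max}/\lambda_{\min}}\,\|x\|_L$, as desired.

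I expect the main obstacle to be not the spectral estimates themselves, which are routine, but correctly isolating and eliminating the kernel contribution: this is precisely where the $\ker(L)$-preserving assumption is indispensable, since without it $(I-\Pi)x_0$ could leave $\ker(L)$ and contribute an uncontrolled amount to the numerator while adding nothing to the denominator $\|x\|_L$, which is exactly why $\RSAconst$ need not be finite in general.
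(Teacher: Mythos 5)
Your proof is correct and follows essentially the route the paper intends: the kernel decomposition you use is exactly that of Lemma~\ref{lem:kernel}, which applied to the $\ker(L)$-preserving matrix $M = I - \Pi$ gives $\|x - \Pi x\|_L \le \|I-\Pi\|_L\,\|x\|_L$, and your three spectral estimates amount to the paper's stated general bound $\|M\|_L \le \|M\|\sqrt{\lambda_{\max}/\lambda_{\min}}$ combined with $\|I-\Pi\| = 1$ for the orthogonal projector $\Pi$. The paper does not actually write out a standalone proof of this proposition, so your self-contained version --- which is careful to apply the $\lambda_{\min}$ estimate only to the component $x_1 \in \ker(L)^{\perp}$ rather than to $(I-\Pi)x_1$ --- correctly fills that gap.
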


Hence, some examples where $\RSAconst$ is finite include:
\begin{example}\label{ex:combinatorial}
For uniform coarsenings with $L=D-A$ and connected graph $G$, $\text{ker}(L)$ is the constant vector\footnote{Note that this would also work with several connected components, if no nodes from different components are mapped to the same super-node.}, and $\Pi$ is $\textup{ker}(L)$-preserving.  
    This is the case examined by \cite{loukas2019graph}.
    \end{example}
    \begin{example}\label{ex:pd}
For positive definite ``Laplacians'', $\textup{ker}(L)=\{0\}$. This is a deceptively simple solution for which $\|\cdot\|_L$ is a true norm. This can be obtained e.g. with $L = \delta I_N + \hat L$ for any p.s.d. Laplacian $\hat L$ and small constant $\delta>0$. This leaves its eigenvectors unchanged and add $\delta$ to its eigenvalues, and therefore does not alter the fundamental structure of the coarsening problem.
\end{example}

Given the adjacency matrix $A \in \mathbb{R}^{N \times N}$ of $G$, there are several possibilities to define an adjacency matrix $A_c$ for the graph $G_c$ \cite{Huang2021,kumar2023featured}. A natural choice that we make in this paper is
\begin{equation}\label{eq:Ac}
    A_c = (Q^+)^\top A Q^+\, .
\end{equation}
In the case of a uniform coarsening, $(A_c)_{k\ell}$ equals the sum of edge weights for all edges in the original graph between all nodes mapped to the super-node $k$ and all nodes mapped to $\ell$. Moreover, we have the following property, derived from \cite{loukas2019graph}.
\begin{proposition}\label{prop:Ac}
Assume that the coarsening $Q$ is uniform and consider combinatorial Laplacians $L = D(A) - A$ and $L_c = D(A_c) - A_c$. Then $L_c = (Q^+)^\top L Q^+$, and
\begin{equation}\label{eq:isometry}
\forall x \in \preservedspace,\quad (1-\RSAconst)  \lVert x \rVert_{\Lpropag} \leq  \lVert x_c \rVert_{\Lpropag_c} \leq (1+\RSAconst)  \lVert x \rVert_{\Lpropag}
\end{equation}
\end{proposition}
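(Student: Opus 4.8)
The plan is to split the statement into two independent pieces: first the algebraic identity $L_c = (Q^+)^\top L Q^+$, and then the two-sided isometry \eqref{eq:isometry}, which will follow from the RSA constant once the first identity is available. I would prove them in that order, since the isometry consumes the identity.

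For the first identity I would start from $A_c = (Q^+)^\top A Q^+$ (definition \eqref{eq:Ac}) and the combinatorial Laplacians $L_c = D(A_c) - A_c$, $L = D(A) - A$. Since the adjacency term $(Q^+)^\top A Q^+$ already matches $A_c$ by construction, it suffices to establish the degree identity $D(A_c) = (Q^+)^\top D(A) Q^+$; subtracting then yields the claim. This is exactly where uniformity is used. For a uniform coarsening the lifting matrix $Q^+$ is a $\{0,1\}$ assignment matrix with a single nonzero entry per row (each node belongs to one super-node) and pairwise disjoint column supports. Two consequences follow. First, $Q^+ 1_n = 1_N$, so that $A_c 1_n = (Q^+)^\top A\, 1_N = (Q^+)^\top d$ with $d = A 1_N$ the degree vector, giving $(D(A_c))_{kk} = \sum_{i \mapsto k} d_i$. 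Second, the off-diagonal entries of $(Q^+)^\top D(A) Q^+$ vanish because no node lies in two super-nodes, while its diagonal entries are $((Q^+)^\top D(A) Q^+)_{kk} = \sum_{i\mapsto k} d_i$ (the $\{0,1\}$ weights square to themselves). Comparing the two closes the identity.

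For the isometry I would first rewrite the coarsened semi-norm through the projection $\Pi = Q^+ Q$. Using $x_c = Qx$ together with the identity just proved, and grouping $Q^+ Q = \Pi$, one gets $\|x_c\|_{L_c}^2 = x^\top Q^\top (Q^+)^\top L Q^+ Q x = (\Pi x)^\top L (\Pi x) = \|\Pi x\|_L^2 = \|\tilde x\|_L^2$, hence $\|x_c\|_{L_c} = \|\Pi x\|_L$. Then the triangle inequality for the semi-norm $\|\cdot\|_L$ applied to $\Pi x = x - (x - \Pi x)$ gives $\|x\|_L - \|x - \Pi x\|_L \leq \|\Pi x\|_L \leq \|x\|_L + \|x - \Pi x\|_L$. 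Finally, for $x \in \preservedspace$, homogeneity of $\|\cdot\|_L$ and the definition \eqref{eq:epsilon} yield $\|x - \Pi x\|_L \leq \RSAconst \|x\|_L$ (rescaling to unit norm, which is legitimate since $\preservedspace$ is a subspace), and substituting both bounds produces \eqref{eq:isometry}.

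The main obstacle is the degree identity $D(A_c) = (Q^+)^\top D(A) Q^+$: it is the only step that genuinely relies on uniformity, and it breaks for general well-mapped coarsenings, where $Q^+ 1_n \neq 1_N$ and the diagonal picks up squared lifting weights rather than plain degree sums. Everything downstream — the reduction to $\|\Pi x\|_L$ and the triangle-inequality argument — is routine and holds for arbitrary coarsenings. So the real content is the careful bookkeeping showing that the uniform structure simultaneously forces the off-diagonals to vanish and the diagonal to reproduce the correct degree sums.
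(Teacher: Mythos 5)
Your proof is correct and follows essentially the same route as the paper's: establish $L_c = (Q^+)^\top L Q^+$, deduce $\lVert x_c\rVert_{L_c} = \lVert \Pi x\rVert_L$, and conclude by the (reverse) triangle inequality together with the definition of $\RSAconst$. The only difference is that you spell out the degree identity $D(A_c) = (Q^+)^\top D(A) Q^+$ via the $\{0,1\}$ structure of $Q^+$ for uniform coarsenings, which the paper dismisses as ``direct computation''; your bookkeeping there is accurate.
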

This draws a link between the RSA and a notion of near-isometric embedding for vectors in $\preservedspace$. Note that the proposition above is \emph{not} necessarily true when considering normalized Laplacians, or non uniform coarsenings.
In the next section, we propose a new propagation matrix on coarsened graphs and draw a link between the RSA constant $\RSAconst$ and message-passing guarantees.

\begin{figure}
    \centering
    \begin{minipage}[c]{0.28\textwidth}
        \centering
        \resizebox{!}{3.5cm}{
        \begin{tikzpicture}
\begin{scope}[every node/.style={circle, thick,draw, fill = white}]
    \node  (0) at (0,1.5) {};
    \node  (2) at (0,-1.5) {};
    \node  (1) at (2,1.5) {};
    \node  (3) at (2,-1.5) {};
    \node  (5) at (4,0) {};
    \node  (4) at (2,0) {} ;
\end{scope}

\begin{scope}[
              every node/.style={fill=white,circle},
              every edge/.style={draw=black,very thick}]
    \path [-] (0) edge  (2); 
    \path [-] (2) edge  (3);
    \path [-] (0) edge  (1);
    \path [-] (1) edge  (5);
    \path [-] (3) edge  (5);
    \path [-] (4) edge  (5);
    \path[-] (4) edge (2);
\end{scope}

\path (0) -- (1) coordinate[midway] (center);

\draw[red, dashed, thick] (center) ellipse[x radius=1.5cm, y radius=0.6cm];

\path (2) -- (4) coordinate[midway] (centercluster2);

\draw[blue, dashed, thick] (centercluster2) ellipse[x radius=2cm, y radius=1.4cm];

\draw[green, dashed, thick] (5) ellipse[x radius=0.5cm, y radius=0.5cm];

\end{tikzpicture} 
        }
    \end{minipage}\qquad\qquad
    \begin{minipage}[c]{0.28\textwidth}
        \centering
        \resizebox{!}{4.5cm}{
        \begin{tikzpicture}

\begin{scope}[every node/.style={circle,thick,draw, minimum size=0.5cm}]
    \node (0) [draw=red] at (0,1.5) {};
    \node (1) [draw=blue] at (0,-1.5) {};
    \node (2) [draw=green]at (2.59,0) {};

\end{scope}

\begin{scope}[
              every node/.style={fill=white,circle},
              every edge/.style={draw=black,very thick}]
    \path [-] (0) edge node[left] {1} (1); 
    \path [-] (1) edge node[below right] {2} (2);
    \path [-] (0) edge node[above right] {1} (2);
    \path [-] (0) edge[looseness=.3cm] node[above]{1} (0);
    \path [-] (1) edge[out=240,in=160,looseness=.3cm] node[below left]{2} (1);

\end{scope}
\end{tikzpicture}
        }
    \end{minipage} \qquad
    \begin{minipage}[c]{0.28\textwidth}
        \centering
        \resizebox{!}{5cm}{
        \begin{tikzpicture}

\begin{scope}[every node/.style={circle,thick,draw, minimum size=0.5cm}]
    \node (0) [draw=red] at (0,1.5) {};
    \node (1) [draw=blue] at (0,-1.5) {};
    \node (2) [draw=green]at (2.59,0) {};

\end{scope}

\begin{scope}[
              every node/.style={fill=white,circle},
              every edge/.style={draw=black,very thick}]
    \path [->] (0) edge[bend left=20] node[xshift = 1pt] {1/2} (1); 
    \path [->] (1) edge[bend left=20] node[xshift = -1pt] {1/3} (0); 
    \path [->] (1) edge[bend left=15] node {2/3} (2);
    \path [->] (2) edge[bend left=15] node {2} (1);
    \path [->] (0) edge[bend left=15] node {1/2} (2);
    \path [->] (2) edge[bend left=15] node {1} (0);
    \path [->] (0) edge[looseness=.3cm] node[above]{1/2} (0);
    \path [->] (1) edge[out=240,in=160,looseness=.3cm] node[below left]{2/3} (1);

\end{scope}
\end{tikzpicture}
        }
    \end{minipage}
    
    \begin{minipage}[t]{0.28\textwidth}
        \centering
        \subcaption{}
        \label{fig:a}
    \end{minipage}\qquad\qquad
    \begin{minipage}[t]{0.28\textwidth}
        \centering
        \subcaption{}
        \label{fig:b}
    \end{minipage}\qquad
    \begin{minipage}[t]{0.28\textwidth}
        \centering
        \subcaption{}
        \label{fig:c}
    \end{minipage}
    
    \begin{minipage}[c]{\textwidth}
    \small
    $$
    Q = \begin{bmatrix}
\frac{1}{2} & \frac{1}{2}& 0& 0 & 0 & 0 \\
0 & 0&\frac{1}{3}  & \frac{1}{3}  &  \frac{1}{3} &0  \\
0 & 0 & 0 & 0& 0& 1 \\
\end{bmatrix},\quad
    Q^+ = \begin{bmatrix}
1 & 0 & 0 \\
1 & 0 & 0 \\
0 & 1 & 0 \\
0 & 1 & 0 \\
0 & 1 & 0 \\
0 & 0 & 1 \\
\end{bmatrix},\quad
    \pinormalized = Q^+Q = \begin{bmatrix}
\frac{1}{2} & \frac{1}{2} & 0 & 0 & 0 & 0\\
\frac{1}{2} & \frac{1}{2} & 0 & 0 & 0  & 0\\
0 & 0 & \frac{1}{3} & \frac{1}{3} & \frac{1}{3} & 0 \\
0 & 0 & \frac{1}{3} & \frac{1}{3} & \frac{1}{3} & 0 \\
0 & 0 & \frac{1}{3} & \frac{1}{3} & \frac{1}{3} & 0\\
0 & 0 & 0 & 0 & 0 &1 \\
\end{bmatrix}
    $$
\end{minipage}

    \vspace{3pt}
    \begin{minipage}[t]{0.8\textwidth}
        \centering
        \subcaption{}
        \label{fig:d}
    \end{minipage}
    
    \caption{Example of uniform coarsening. \textbf{(a)}: original graph $G$; \textbf{(b)}: coarsened adjacency matrix $A_c$; \textbf{(c)} representation of the proposed $\newmatrixpropag$ when $\matrixpropag = A$; \textbf{(d)}: corresponding matrices $Q,Q^+,\Pi$. }\label{fig:coarsening_example}
\end{figure}
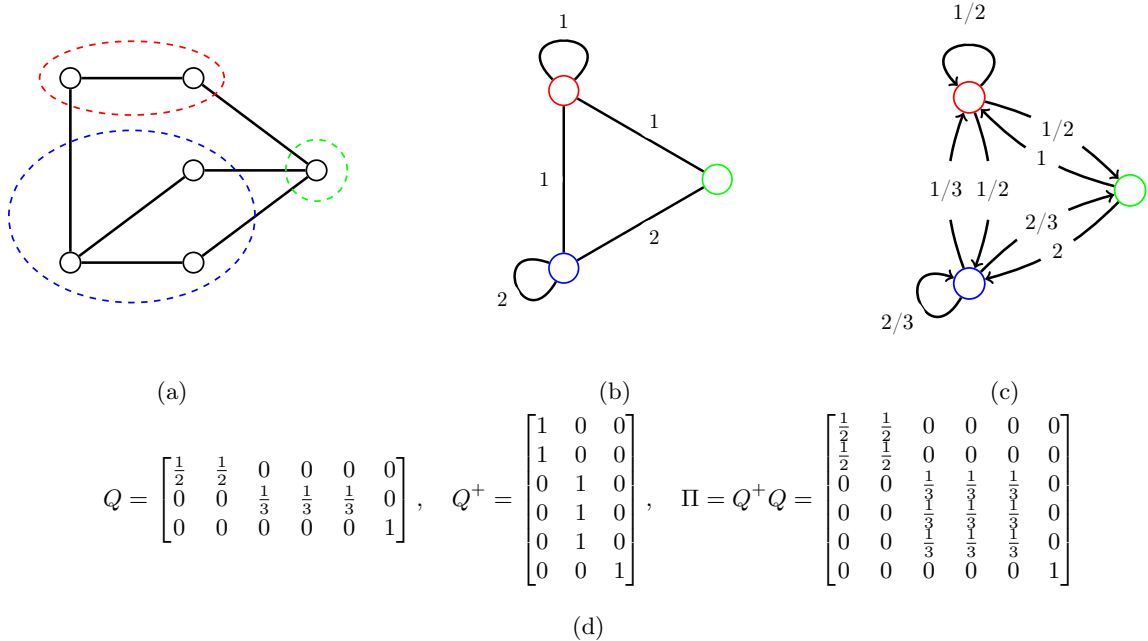

\section{Message-Passing on coarsened graphs} \label{sec:contrib}

In the previous section, we have seen that coarsenings algorithms generally aim at preserving the spectral properties of the graph Laplacian, leading to small RSA constants $\RSAconst$. However, this generally does not directly translate to guarantees on the Message-Passing process that is at the core of GNNs, which as mentioned in the introduction is materialized by the matrix $S$. In this section, we propose a new propagation matrix such that \textbf{small RSA constants leads to preserved message-passing}, which then leads to guarantees for training GNNs on coarsened graphs.

\subsection{A new propagation matrix on coarsened graph}\label{sec:newS}

Given a graph $G$ with a propagation matrix $S$ and a coarsened graph $G_c$ with a coarsening matrix $Q$, our goal is to define a propagation matrix $\newmatrixpropag \in \mathbb{R}^{n \times n}$ such that one round of message-passing on the coarsened signal $x_c$ followed by lifting is close to performing message-passing in the original graph: $Q^+ \newmatrixpropag x_c \approx S x$.
Assuming that the propagation matrix $\matrixpropag = f_S(A)$ is the output of a function $f_S$ of the graph's adjacency matrix, the most natural choice, often adopted in the literature \cite{dickens2024graph}, is therefore to simply take $S_c = f_S(A_c)$, where $A_c$ is the adjacency matrix of the coarsened graph defined in \eqref{eq:Ac}. However, this does not generally leads to the desired guarantees: indeed, considering for instance $S=A$, we have in this case $Q^+ S_c x_c = Q^+ (Q^+)^\top A \Pi x$, which involves the quite unnatural term $Q^+ (Q^+)^\top$. For other choices of normalized $S$, the situation is even less clear. Some authors propose different variant of $S_c$ adapted to specific cases \cite{Huang2021, ying2018Diffpool} (see Sec.~\ref{sec:expe}), but none offers generic message-passing guarantees.
To address this, we propose a new propagation matrix: 
\begin{equation}\label{eq:Sc}
    \newmatrixpropag = Q \matrixpropag Q^+ \in \mathbb{R}^{n \times n}\, .
\end{equation}

This expression is conceptually simple: it often amounts to some reweighting. For instance, when $S=A$ and in the case of uniform coarsening, we have $(\newmatrixpropag)_{k\ell} = (A_c)_{k\ell} / n_k$ (Fig.~\ref{fig:c}). Despite this simplicity, we will see that under some mild hypotheses this choice indeed leads to preservation guarantees of message-passing for coarsenings with small RSA constants.

\paragraph{Orientation.} An important remark is that, unlike all the examples in the literature, and unlike the adjacency matrix $A_c$ defined in \eqref{eq:Ac}, the proposed matrix \textbf{$\newmatrixpropag$ is generally asymmetric, even when $S$ is symmetric}. This means that our guarantees are obtained by performing \emph{directed} message-passing on the coarsened graph, even when the original message-passing procedure was undirected. Conceptually, this is an important departure from previous works.

\subsection{Message-Passing guarantees}\label{sec:MP}

In this section, we show how the proposed propagation matrix \eqref{eq:Sc} allows to transfer the spectral approximation guarantees to message-passing guarantees, under some hypotheses. First, we must make some technical assumptions relating to the kernel of the Laplacian.

\begin{assumption}\label{ass:kernel}
    Assume that $\Pi$ and $S$ are both $\textup{ker}(L)$-preserving.
\end{assumption}

Moreover, since spectral approximation pertains to a subspace $\preservedspace$, we must assume that this subspace is left unchanged by the application of $S$.

\begin{assumption}\label{ass:preserving}
    Assume that $\matrixpropag$ is $\preservedspace$-preserving. 
\end{assumption}

As mentioned before, for Examples \ref{ex:combinatorial} and \ref{ex:pd}, the projection $\Pi$ is $\text{ker}(L)$-preserving. Moreover, $\preservedspace$ is often chosen to be the subspace spanned by the low-frequency eigenvectors of $L$ and in this case, all matrices of the form $S = \alpha I_N + \beta L$ for some constant $\alpha,\beta$ are both $\text{ker}(L)$-preserving and $\preservedspace$-preserving.
Hence, for instance, a primary example in practice is to choose GCNconv \cite{kipf2016GCN} with $S = D(\hat A)^{-\frac12} \hat A D(\hat A)^{-\frac12}$ with $A = A+I_N$, and to compute a coarsening with a good RSA constant for the ``Laplacian'' $L = (1+ \delta) I_N - S$ with small $\delta>0$ and $\preservedspace$ spanned by eigenvectors of $L$. In this case, Assumptions \ref{ass:kernel} and \ref{ass:preserving} are satisfied. This is the implementation we choose in our experiments.

We now state the main result of this section.

\begin{theorem}\label{thm:propag}
    Define $\newmatrixpropag$ as \eqref{eq:Sc}. Under Assumption \ref{ass:kernel} and \ref{ass:preserving}, for all $x \in \preservedspace$,
    \begin{equation}
        \lVert \matrixpropag x - Q^+ \newmatrixpropag x_c \rVert_{\Lpropag}  \leq \RSAconst \lVert x \rVert_{\Lpropag} \left( C_S + C_\Pi \right)
    \end{equation}
    where $C_S := \lVert \matrixpropag \rVert_{\Lpropag}$ and $C_\Pi := \lVert \pinormalized \matrixpropag \rVert_{\Lpropag}$.
\end{theorem}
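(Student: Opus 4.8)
The plan is to first rewrite the quantity being bounded purely in terms of the projection $\Pi = Q^+ Q$. Since $x_c = Qx$ and $\newmatrixpropag = Q \matrixpropag Q^+$, direct substitution gives $Q^+ \newmatrixpropag x_c = Q^+ Q \matrixpropag Q^+ Q x = \Pi \matrixpropag \Pi x$, so the target is $\lVert \matrixpropag x - \Pi \matrixpropag \Pi x\rVert_\Lpropag$. I would then split this error with a telescoping insertion of $\Pi \matrixpropag x$:
\[
\matrixpropag x - \Pi \matrixpropag \Pi x = (I_N - \Pi)\matrixpropag x + \Pi \matrixpropag (I_N - \Pi) x,
\]
and bound the two resulting terms separately, each contributing one factor of $\RSAconst$.

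For the first term I would apply the RSA bound directly to the signal $\matrixpropag x$. This is where Assumption~\ref{ass:preserving} enters: since $x \in \preservedspace$ and $\matrixpropag$ is $\preservedspace$-preserving, we have $\matrixpropag x \in \preservedspace$, so the definition of $\RSAconst$ gives $\lVert(I_N-\Pi)\matrixpropag x\rVert_\Lpropag \le \RSAconst \lVert\matrixpropag x\rVert_\Lpropag$. For the second term I would set $z = (I_N-\Pi)x$ and apply RSA to $x$ itself (legitimate since $x \in \preservedspace$), yielding $\lVert z\rVert_\Lpropag \le \RSAconst \lVert x\rVert_\Lpropag$, and then treat $\Pi\matrixpropag$ as an operator acting on $z$. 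Combining the two pieces with $\lVert\matrixpropag x\rVert_\Lpropag \le C_S \lVert x\rVert_\Lpropag$ and $\lVert\Pi \matrixpropag z\rVert_\Lpropag \le C_\Pi \lVert z\rVert_\Lpropag$ then produces exactly $\RSAconst\lVert x\rVert_\Lpropag(C_S + C_\Pi)$.

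The step I expect to be the main obstacle is justifying the two sub-multiplicativity inequalities $\lVert\matrixpropag x\rVert_\Lpropag \le \lVert\matrixpropag\rVert_\Lpropag \lVert x\rVert_\Lpropag$ and $\lVert\Pi \matrixpropag z\rVert_\Lpropag \le \lVert\Pi \matrixpropag\rVert_\Lpropag \lVert z\rVert_\Lpropag$, because $\lVert\cdot\rVert_\Lpropag$ is only a seminorm when $\ker(\Lpropag) \ne \{0\}$. The naive identity $\Lpropag^{1/2} M y = (\Lpropag^{1/2} M \Lpropag^{-1/2})(\Lpropag^{1/2} y)$ fails in general, since $\Lpropag^{-1/2}\Lpropag^{1/2}$ equals the orthogonal projector $P$ onto $\ker(\Lpropag)^\perp$ rather than the identity. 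The fix, and the precise role of Assumption~\ref{ass:kernel}, is the following observation, which I would isolate as a short lemma: if a matrix $M$ is $\ker(\Lpropag)$-preserving, then writing $y = Py + (I_N-P)y$, the component $(I_N-P)y$ lies in $\ker(\Lpropag)$, so $M(I_N-P)y \in \ker(\Lpropag)$ and hence $\Lpropag^{1/2} M(I_N-P)y = 0$; thus $\Lpropag^{1/2}My = (\Lpropag^{1/2} M \Lpropag^{-1/2})(\Lpropag^{1/2}y)$ after all, giving $\lVert My\rVert_\Lpropag \le \lVert M\rVert_\Lpropag \lVert y\rVert_\Lpropag$. I would invoke this with $M = \matrixpropag$ (kernel-preserving by Assumption~\ref{ass:kernel}) and with $M = \Pi \matrixpropag$ (kernel-preserving since both $\Pi$ and $\matrixpropag$ are). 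A minor related point is that the RSA bound, stated for unit-norm signals, extends to all of $\preservedspace$ by homogeneity, with the degenerate case $\lVert w\rVert_\Lpropag = 0$ handled by noting that $(I_N-\Pi)w \in \ker(\Lpropag)$ whenever $\Pi$ is $\ker(\Lpropag)$-preserving.
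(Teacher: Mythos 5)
Your proposal is correct and follows essentially the same route as the paper's proof: the identity $Q^+\newmatrixpropag x_c = \Pi\matrixpropag\Pi x$, the same telescoping split into $(I_N-\Pi)\matrixpropag x + \Pi\matrixpropag(I_N-\Pi)x$, RSA applied to $\matrixpropag x$ and to $x$, and the same kernel-preservation lemma (the paper's Lemma~\ref{lem:kernel}) to justify sub-multiplicativity of the seminorm. Your explicit handling of the degenerate case $\lVert w\rVert_\Lpropag = 0$ is a small point of extra care the paper leaves implicit, but the argument is the same.
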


\begin{proof}[Sketch of proof]
The Theorem is proved in App.~\ref{app:proof}. The proof is quite direct, and relies on the fact that, for this well-designed choice \eqref{eq:Sc} of $\newmatrixpropag$, the lifted signal is precisely $Q^+ \newmatrixpropag x_c = \pinormalized \matrixpropag \pinormalized x$. Then, bounding the error incurred by $\pinormalized$ using the RSA, we show that this is indeed close to performing message-passing by $S$ in the original graph.
\end{proof}

This theorem shows that the RSA error $\RSAconst$ directly translates to an error bound between $Sx$ and $Q^+ \newmatrixpropag x_c$. As we will see in the next section, this leads to guarantees when training a GNN on the original graph and the coarsened graph. First, we discuss the two main multiplicative constant involved in Thm.~\ref{thm:propag}.

\paragraph{Multiplicative constants.} In full generality, for any matrix $M$ we have $\|M\|_L \leq \|M\| \sqrt{\lambda_{\max}/\lambda_{\min}}$. Moreover, when $M$ and $L$ commute, we have $\|M\|_L \leq \|M\|$. As mentioned before, choosing $S = \alpha I_N + \beta L$ for some constants $\alpha, \beta$ is a primary example to satisfy our assumptions. In this case $C_S = \|S\|_L \leq \|S\|$. Then, if $S$ is properly normalized, e.g. for the GCNconv \cite{kipf2016GCN} example outlined above, we have $\|S\|\leq 1$. For combinatorial Laplacian $L=D-A$ however, we obtain $\|S\| \leq |\alpha| + |\beta|N$. We observed in our experiments that the combinatorial Laplacian generally yields poor results for GNNs. 

For $C_\Pi$, in full generality we only have $C_\Pi \leq C_S \|\Pi\|_L \leq C_S \sqrt{\frac{\lambda_{\max}}{\lambda_{\min}}}$, since $\Pi$ is an orthogonal projector. However, in practice we generally observe that the exact value $C_\Pi = \|\Pi S\|_L$ is far better than this ratio of eigenvalues (e.g. we observe a ratio of roughly $C_\Pi\approx (1/10)\cdot \sqrt{\lambda_{\max}/\lambda_{\min}}$ in our experiments). 
Future work may examine more precise bounds in different contexts.

\subsection{GNN training on coarsened graph}\label{sec:gnn}

In this section, we instantiate our message-passing guarantees to GNN training on coarsened graph, with SGC as a primary example. To fix ideas, we consider a single large graph $G$, and a node-level task such as node classification or regression. 
Given some node features $X \in \mathbb{R}^{N \times d}$, the goal is to minimize a loss function $J:\mathbb{R}^N \to \mathbb{R}_+$ on the output of a GNN $\Phi_\theta(X,S) \in \mathbb{R}^N$ (assumed unidimensional for simplicity) with respect to the parameter $\theta$:
\begin{equation}\label{eq:loss}
    \min_{\theta \in \Theta} R(\theta) \text{ with } R(\theta):= J(\Phi_\theta(X,S))
\end{equation}
where $\Theta$ is a set of parameters that we assume bounded. For instance, $J$ can be the cross-entropy between the output of the GNN and some labels on training nodes for classification, or the Mean Square Error for regression.
The loss is generally minimized by first-order optimization methods on $\theta$, which requires multiple calls to the GNN on the graph $G$. Roughly, the computational complexity of this approach is $O(T(N+E)D)$, where $T$ is the number of iterations of the optimization algorithm, $D$ is the number of parameters in the GNN, and $E$ is the number of nonzero elements in $S$. Instead, one may want to train on the coarsened graph $G_c$, which can be done by minimizing instead\footnote{Note that we apply the GNN on the coarsened graph, but still lift its output to compute the loss on the training nodes of the original graph. Another possibility would be to also coarsen the labels to directly compute the loss on the coarsened graph \cite{Huang2021}, but this is not considered here. See App.~\ref{app:training} for more discussion.}:
\begin{equation}\label{eq:loss-coarse}
    R_c(\theta) := J(Q^+\Phi_\theta(X_c, \newmatrixpropag))
\end{equation}
where $X_c = QX$. That is, the GNN is applied on the coarsened graph, and the output is then lifted to compute the loss, which is then back-propagated to compute the gradient of $\theta$. The computational complexity then becomes $O(T(n+e)D + TN)$, where $e\leq E$ is the number of nonzeros in $\newmatrixpropag$, and the term $TN$ is due to the lifting. As this decorrelates $N$ and $D$, it is in general much less costly.

We make the following two assumptions to state our result. Since our bounds are expressed in terms of $\|\cdot\|_L$, we must handle it with the following assumption.
\begin{assumption}\label{ass:JLipschitz}
    We assume that there is a constant $C_J$ such that
    \begin{equation}
        \label{eq:JLipschitz}
        |J(x) - J(x')| \leq C_J \|x-x'\|_L
    \end{equation}
\end{assumption}
For most loss functions, it is easy to show that $|J(x) - J(x')| \lesssim \|x-x'\|$, and when $L$ is positive definite (Example \ref{ex:pd}) then $\|\cdot\| \leq \frac{1}{\sqrt{\lambda_{\min}}} \|\cdot\|_L$. Otherwise, one must handle the kernel of $L$, which may be done on a case-by-case basis of for an appropriate choice of $J$.

The second assumption relates to the activation function. It is here mostly for technical completeness, as \emph{we do not have examples where it is satisfied beyond the identity $\sigma=id$}, which corresponds to the SGC architecture \cite{wu2019SGC} often used in theoretical analyses \cite{Zhu2021, Keriven2022a}. 
\begin{assumption}\label{ass:sigma}
    We assume that:
    \begin{enumerate}
    \item
    $\sigma$ is $\preservedspace$-preserving, that is, for all $x \in \preservedspace$, we have $\sigma(x) \in \preservedspace$. We discuss this constraint below.
    \item $\|\sigma(x) - \sigma(x')\|_L \leq C_\sigma \|x-x'\|_L$. Note that most activations are $1$-Lipschitz w.r.t. the Euclidean norm, so this is satisfied when $L$ is positive-definite with $C_\sigma = \sqrt{\lambda_{\max}/\lambda_{\min}}$.
    \item $\sigma$ and $Q^+$ commute: $\sigma(Q^+ y) = Q^+ \sigma(y)$. This is satisfied for all uniform coarsenings, or when $\sigma$ is $1$-positively homogeneous: $\sigma(\lambda x) = \lambda \sigma(x)$ for nonnegative $\lambda$ (e.g. ReLU).
    \end{enumerate}
\end{assumption}
Item i) above means that, when $\preservedspace$ is spanned by low-frequency eigenvectors of the Laplacian, $\sigma$ does not induce high frequencies. In other words, we want $\sigma$ to preserve smooth signal. 
For now, the only example for which we can guarantee that Assumption \ref{ass:sigma} is satisfied is when $\sigma=id$ and the GNN is linear, which corresponds to the SGC architecture \cite{wu2019SGC}. As is the case with many such analyses of GNNs, non-linear activations are a major path for future work. A possible study would be to consider random geometric graphs for which the eigenvectors of the Laplacian are close to explicit functions, e.g. spherical harmonics for dot-product graphs \cite{Araya2019}. In this case, it may be possible to explicitely prove that Assumption \ref{ass:sigma} holds, but this is out-of-scope of this paper.

Our result on GNNs is the following.

\begin{theorem}\label{thm:SGC}
Under Assumptions \ref{ass:kernel}-\ref{ass:sigma}: for all node features $X \in \mathbb{R}^{N \times d}$ such that $X_{:,i} \in \preservedspace$, denoting by $\theta^\star = \arg\min_{\theta \in \Theta} R(\theta)$ and $\theta_c = \arg\min_{\theta \in \Theta} R_c(\theta)$, we have
\begin{equation}
    R(\theta_c) - R(\theta^\star) \leq C \RSAconst \|X\|_{:,L} 
\end{equation}

with $C = 2 C_J C_\sigma^k C_{\Theta} (C_S + C_\Pi) \sum_{l=1}^k \bar C_\Pi^{k-l} C_S^{l-1}
$ where $\bar C_\Pi := \|\Pi S \Pi\|_L$ and $C_\Theta$ is a constant that depends on the parameter set $\Theta$.
\end{theorem}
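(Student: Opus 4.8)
The plan is to reduce the excess-risk bound to a uniform control of the lifted output discrepancy, and then to obtain that control by a layer-by-layer induction that invokes Theorem~\ref{thm:propag} as a black box. First I would use the standard optimization-transfer decomposition: since $\theta_c$ minimizes $R_c$, we have $R_c(\theta_c)-R_c(\theta^\star)\le 0$, so
\begin{equation*}
R(\theta_c)-R(\theta^\star) \;\le\; \bigl[R(\theta_c)-R_c(\theta_c)\bigr] + \bigl[R_c(\theta^\star)-R(\theta^\star)\bigr] \;\le\; 2\sup_{\theta\in\Theta}\bigl|R(\theta)-R_c(\theta)\bigr|.
\end{equation*}
By Assumption~\ref{ass:JLipschitz}, $|R(\theta)-R_c(\theta)|=|J(\Phi_\theta(X,S))-J(Q^+\Phi_\theta(X_c,\newmatrixpropag))|\le C_J\,\|H^k - Q^+ H_c^k\|_\Lpropag$, where $H^l$ and $H_c^l$ are the layerwise representations on $G$ and $G_c$. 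So the whole theorem rests on bounding $\|H^k - Q^+ H_c^k\|_{:,L}$ uniformly in $\theta$, and the factor $2C_J$ will appear exactly as in the stated constant.

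The core of the proof is a recursion on $\delta_l:=\|H^l-G^l\|_{:,L}$ with $G^l:=Q^+ H_c^l$. Two preliminary facts drive it. First, using $\newmatrixpropag=QSQ^+$, the identities $Q^+Q=\Pi$ and $\Pi Q^+=Q^+$, and the commutation $\sigma(Q^+ y)=Q^+\sigma(y)$ (Assumption~\ref{ass:sigma}.iii), one rewrites the lifted coarsened layer as $G^l=\sigma(\Pi S\Pi\, G^{l-1}\theta_l)$, mirroring $H^l=\sigma(S H^{l-1}\theta_l)$. Second, an induction shows every column of $H^{l}$ stays in $\preservedspace$: $H^0=X$ has columns in $\preservedspace$ by hypothesis, $S$ preserves $\preservedspace$ (Assumption~\ref{ass:preserving}), $\preservedspace$ is a subspace (stable under $\theta_l$), and $\sigma$ preserves $\preservedspace$ (Assumption~\ref{ass:sigma}.i). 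Then, peeling off $\sigma$ via its $C_\sigma$-Lipschitzness (Assumption~\ref{ass:sigma}.ii) and factoring the right-multiplication by $\theta_l$ into the constant $C_\Theta$, I would split the argument discrepancy as
\begin{equation*}
S H^{l-1}-\Pi S\Pi\, G^{l-1} \;=\; (S-\Pi S\Pi)H^{l-1} \;+\; \Pi S\Pi\bigl(H^{l-1}-G^{l-1}\bigr).
\end{equation*}
The first term is handled columnwise by Theorem~\ref{thm:propag} (recall $Q^+\newmatrixpropag(Qx)=\Pi S\Pi x$), giving $\|(S-\Pi S\Pi)H^{l-1}\|_{:,L}\le \RSAconst(C_S+C_\Pi)\|H^{l-1}\|_{:,L}$ — this produces the $(C_S+C_\Pi)$ factor. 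The second term is bounded by $\bar C_\Pi\,\delta_{l-1}$ with $\bar C_\Pi=\|\Pi S\Pi\|_\Lpropag$, producing the propagation factor. This yields the recursion
\begin{equation*}
\delta_l \;\le\; C_\sigma C_\Theta\Bigl(\RSAconst(C_S+C_\Pi)\,\|H^{l-1}\|_{:,L} + \bar C_\Pi\,\delta_{l-1}\Bigr),\qquad \delta_0=\|X-\Pi X\|_{:,L}\le \RSAconst\|X\|_{:,L}.
\end{equation*}

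Finally I would close the recursion. A parallel (simpler) induction bounds the signal norm, $\|H^{l-1}\|_{:,L}\le (C_\sigma C_\Theta C_S)^{l-1}\|X\|_{:,L}$, using $\sigma(0)=0$ (true for the identity and for positively-homogeneous activations) together with $\|S v\|_L\le C_S\|v\|_L$. Substituting this into the linear recursion and summing the geometric terms produces the factor $\sum_{l=1}^k \bar C_\Pi^{\,k-l}C_S^{\,l-1}$, after which the per-layer weight norms collect into the single constant $C_\Theta$ (which, as the statement notes, depends on the bounded set $\Theta$) and the $C_\sigma$ factors collect into $C_\sigma^k$; the homogeneous term $b^k\delta_0$ from the base case is dominated by the $l=1$ summand and absorbed into the leading constant. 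Combining with $2C_J$ gives the claimed bound. I expect the main obstacle to be bookkeeping rather than conceptual: carefully justifying that every intermediate column remains in $\preservedspace$ so that Theorem~\ref{thm:propag} applies at each layer, and controlling the interaction between the column-wise seminorm $\|\cdot\|_{:,L}$, the right-multiplication by $\theta_l$, and the nonlinearity $\sigma$ — this is precisely where all three parts of Assumption~\ref{ass:sigma} and the definition of $C_\Theta$ must be used simultaneously.
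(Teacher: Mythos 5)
Your proposal follows essentially the same route as the paper's proof: the same $2\sup_\theta|R(\theta)-R_c(\theta)|$ reduction, the same columnwise recursion on $E_l=\|H^l-Q^+H_c^l\|_{:,L}$ via the split $(S-\Pi S\Pi)H^{l-1}+\Pi S\Pi(H^{l-1}-Q^+H_c^{l-1})$ with Theorem~\ref{thm:propag} applied to the first term, and the same auxiliary bound $\|H^{l}\|_{:,L}\lesssim (C_\sigma C_S)^{l}\|X\|_{:,L}$. The one place you diverge is the base case: you start the recursion from $\delta_0=\|X-\Pi X\|_{:,L}$ and claim the resulting homogeneous term $b^k\delta_0$ is ``dominated by the $l=1$ summand,'' which would require $\bar C_\Pi\le C_S+C_\Pi$ — not established, since $\|\Pi\|_L$ may exceed $1$. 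The paper avoids this by observing that the second term vanishes exactly at $l=1$ (because $H_c^0=QX$ gives $\Pi S\Pi(X-\Pi X)=\Pi S(\Pi X-\Pi^2X)=0$), so the recursion starts cleanly from $E_1$; with that one-line fix your argument yields precisely the stated constant.
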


The proof of Thm.~\ref{thm:SGC} is given in App.~\ref{app:proofSGC}. The Theorem states that training a GNN that uses the proposed $\newmatrixpropag$ on the coarsened graph by minimizing \eqref{eq:loss-coarse} yields a parameter $\theta_c$ whose excess loss compared to the optimal $\theta^\star$ is bounded by the RSA constant. Hence, spectral approximation properties of the coarsening directly translates to guarantees on GNN training.  The multiplicative constants $C_S, C_\Pi$ have been discussed in the previous section, and the same remarks apply to $\bar C_\Pi$.

\section{Experiments}\label{sec:expe}

\paragraph{Setup.}  

We choose the propagation matrix from GCNconv \cite{kipf2016GCN}, that is, $\matrixpropag = f_S(A)=  D(\hat A)^{-\frac12}\hat A D(\hat A)^{-\frac12}$ with  $\hat{A} = A+I_N$. As detailed in the previous section, we take $\Lpropag = (1+\delta)I_N - S $ with $\delta=0.001$ and $\preservedspace$ as the $K$ first eigenvectors of $L$ ($K=N/10$ in our experiments), ensuring that Assumptions \ref{ass:kernel} and \ref{ass:preserving} are satisfied. In our experiments, we observed that the combinatorial Laplacian $L=D-A$ gives quite poor results, as it corresponds to unusual propagation matrices $S = \alpha I_N + \beta L$, and the constant $C_S = \|S\|_L$ is very large. Hence our focus on the normalized case.

On coarsened graphs, we compare five propagation matrices: 
\begin{itemize}
    \item $\newmatrixpropag = Q \matrixpropag Q^+$, our proposed matrix;
    \item $\matrixpropag_c = f_S(A_c)$, the naive choice;
    \item $\matrixpropag^{diag}_c = \hat{D'}^{-1/2}(A_c+C)\hat{D'}^{-1/2}$, proposed in \cite{Huang2021}, where $C$ is the diagonal matrix of the $n_k$ and $\hat{D'}$ the corresponding degrees. This yields theoretical guarantees for APPNP when $S$ is GCNconv;
    \item $\matrixpropag^{diff}_c = QSQ^\top$, which is roughly inspired by Diffpool \cite{ying2018Diffpool}; 
    \item $\matrixpropag^{sym}_c = (Q^{+})^\top SQ^+$, which is the lifting employed to compute $A_c$ \eqref{eq:Ac}.
\end{itemize}

\paragraph{Coarsening algorithm.} Recall that the proposed $\newmatrixpropag$ can be computed for any coarsening, and that the corresponding theoretical guarantees depend on the RSA constant $\RSAconst$.
In our experiments, we adapt the algorithm from \cite{loukas2019graph} to coarsen the graphs. It takes as input the graph $G$ and the coarsening ratio desired $r$ and output the propagation matrix $\newmatrixpropag$ and the coarsening matrix $Q$ used for lifting. It is a greedy algorithm which successively merges edges by minimizing a certain cost. While originally designed for the combinatorial Laplacian, we simply adapt the cost to any Laplacian $L$, see App.~\ref{app:loukasalg}. Note however that some mathematical justifications for this approach in \cite{loukas2019graph} are no longer valid for normalized Laplacian, but we find in practice that it produces good RSA constants.

A major limit of this algorithm is its computational cost, which is quite high since it involves large matrix inversion and SVD computation. Hence we limit ourselves to middle-scale graphs like Cora \cite{Mccallum2000} and Citeseer \cite{Giles1998} in the following experiments. The design of more scalable coarsening algorithms with RSA guarantees is an important path for future work, but out-of-scope of this paper.

\begin{wrapfigure}{r}{0.5\textwidth}
\vspace{-20pt} 
  \begin{center}
    \includegraphics[width=0.45\textwidth]{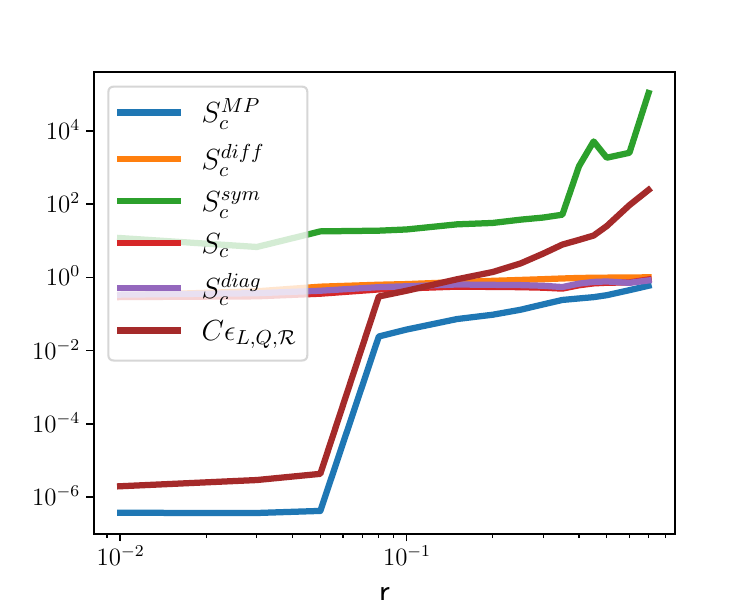}
  \end{center}
  \caption{Message-Passing error for different propagation matrices.}
  \label{fig:mp}
  \vspace{-10pt}
\end{wrapfigure}
\paragraph{Message passing preservation guarantees} To evaluate the effectiveness of the proposed propagation matrix, we first illustrate the theoretical message passing preservation guarantees (Thm.~\ref{thm:propag} and \ref{thm:SGC}) on synthetic graphs, taken as random geometric graph, built by sampling $1000$ nodes with coordinates in $[0,1]^2$ and connecting them if their distance is under a given threshold (details in App.~\ref{app::dataset}). For each choice of propagation matrix and different coarsening ratio, we compute numerically $ \lVert \matrixpropag^k x - Q^+ (\newmatrixpropag)^k x_c \rVert_{\Lpropag} $ for various signals $x \in \preservedspace$. We perform $k=6$ message-passing steps to enhance the difference between propagation matrices. 
We evaluate and plot the upper bound defined by $\RSAconst (C_S + C_\Pi) \sum_{l=1}^k \bar C_\Pi^{k-l} C_S^{l-1}$ (seen in the proof of Theorem \ref{thm:SGC} in App.~\ref{app:proofSGC}) in Fig.~\ref{fig:mp}. We observe that our propagation matrix incurs a significantly lower error compared to other choices, and that as expected, this error is correlated to $\RSAconst$, which is not the case for other choices. More experiments can be found in App.~\ref{app:hyperMP}.

\paragraph{Node classification on real graphs.} We then perform node classification experiments on real-world graphs, namely Cora \cite{Mccallum2000} and Citeseer \cite{Giles1998}, using the public split from \cite{yang2016revisiting}. For simplicity, we restrict them to their largest connected component\footnote{hence the slight difference with other reported results on these datasets}, since using a connected graph is far more convenient for coarsening algorithms (details in App.~\ref{app::dataset}). The training procedure follows that of Sec.~\ref{sec:gnn}: the network is applied to the coarsened graph and coarsened node features, 
its output is lifted to the original graph with $Q^+$, and the label of the original training graph nodes are used to compute the cross-entropy loss, which is then back-propagated to optimize the parameters $\theta$ (pseudocode in App.~\ref{app:training}). Despite the lifting procedure, 
this results in faster training than using the entire graph (e.g., by approximately $30\%$ for a coarsening ratio of $r=0.5$ when parallelized on GPU).
We test SGC \cite{wu2019SGC} with $k=6$ and GCNconv \cite{kipf2016GCN} with $k=2$ on four different coarsening ratio: $r \in \{ 0.3,\, 0.5,\,0.7 \}$. 
Each classification results is averaged on $10$ random training. 

Results 
are reported in Table \ref{tab:trainingresults}. We observe that the proposed propagation matrix $\newmatrixpropag$ yields better results and is more stable, especially for high coarsening ratio. The benefits are more evident when applied to the SGC architecture \cite{wu2019SGC}, for which Assumption \ref{ass:sigma} of Thm.~\ref{thm:SGC} is actually satisfied, than for GCN, for which ReLU is unlikely to satisfy Assumption \ref{ass:sigma}. 
It is also interesting to notice that training on coarsened graphs sometimes achieve better results than on the original graph. This may be explained by the fact that, for homophilic graphs (connected nodes are more likely to have the same label), 
nodes with similar labels are more likely to be merged together during the coarsening, and thus become easier to predict for the model. The detailed hyper-parameters for each model and each dataset can be found in appendix \ref{app::hyperparameters}.

\begin{table}[ht]
    \centering
    \small
    \scalebox{0.8}{
    \begin{tabular}{ccccccc}
        \toprule
        \multirow{2}{*}{SGC} & \multicolumn{3}{c}{Cora} & \multicolumn{3}{c}{ Citeseer } \\
        \cmidrule(lr){2-4} \cmidrule(lr){5-7} 
        $r$ & 
        $0.3$ & $0.5$ & $0.7$ & 
        $0.3$ & $0.5$ & $0.7$  \\
        \midrule
        $\matrixpropag^{sym}_c$ & 
        16.8$\pm$ 3.8 & 16.1 $\pm$ 3.8 & 16.4 $\pm$ 4.7 & 
        17.5 $\pm$ 3.8 & 18.6 $\pm$ 4.6 & 19.8 $\pm$ 5.0  \\
        $\matrixpropag^{diff}_c$ & 
        50.7 $\pm$ 1.4 & 21.8 $\pm$ 2.2 & 13.6 $\pm$ 2.8 & 
        50.5 $\pm$ 0.2 & 30.5 $\pm$ 0.2 & 23.1 $\pm$ 0.0  \\
        
        $\matrixpropag_c$& 
        79.3 $\pm$ 0.1 & 78.7 $\pm$ 0.0 & 74.6 $\pm$ 0.1 & 
        \textbf{74.1} $\pm$ 0.1 & 72.8 $\pm$ 0.1 & 72.5 $\pm$ 0.1  \\
        $\matrixpropag^{diag}_c$ & 
        79.9 $\pm$ 0.1 & 78.7 $\pm$ 0.1 & 77.3 $\pm$ 0.0 & 
        73.6 $\pm$ 0.1 & 73.4 $\pm$ 0.1 & 73.1 $\pm$ 0.4  \\
         $\newmatrixpropag$ \textbf{(ours)} & 
         \textbf{81.8} $\pm$ 0.1 & \textbf{80.3} $\pm$ 0.1 & \textbf{78.5} $\pm$ 0.0 & 
         73.9 $\pm$ 0.1 & \textbf{74.6} $\pm$ 0.1 & \textbf{74.2} $\pm$ 0.1  \\
         Full Graph &  \multicolumn{3}{c}{81.6 $\pm$ 0.1} & \multicolumn{3}{c}{73.6 $\pm$ 0.0}  \\
        \midrule
        \multirow{2}{*}{GCNconv} & \multicolumn{3}{c}{Cora} & \multicolumn{3}{c}{ Citeseer } \\
        \cmidrule(lr){2-4} \cmidrule(lr){5-7} 
        $r$ & 
        $0.3$ & $0.5$ & $0.7$ & 
        $0.3$ & $0.5$ & $0.7$  \\
        \midrule
        
        $\matrixpropag^{sym}_c$ & 
        80.1 $\pm$ 1.3 & 78.1 $\pm$ 1.3 & 30.8 $\pm$ 2.5 & 
        71.0 $\pm$ 1.4 & 62.5 $\pm$ 11 & 52.7 $\pm$ 3.6 \\
        $\matrixpropag^{diff}_c$ & 
        81.9 $\pm$ 1.0 & 74.5 $\pm$ 0.9 & 62.6 $\pm$ 7.1 & 
        72.7 $\pm$ 0.4 & 71.2 $\pm$ 1.7 & 37.6 $\pm$ 0.9 \\
        $\matrixpropag_c$& 
        81.2 $\pm$ 0.8 & 79.9 $\pm$ 0.9 & 78.1 $\pm$ 1.0 & 
        71.7 $\pm$ 0.6 & 70.7 $\pm$ 1.0 & 67.1 $\pm$ 3.1  \\
        $\matrixpropag^{diag}_c$ & 
        81.4 $\pm$ 0.8 & \textbf{80.4} $\pm$ 0.8 & \textbf{78.6} $\pm$ 1.3 & 
        72.1 $\pm$ 0.6 & 70.2 $\pm$ 0.8 & 69.3 $\pm$ 1.9  \\
         $\newmatrixpropag$ \textbf{(ours)} & 
         \textbf{82.1} $\pm$ 0.5 & 79.8 $\pm$ 1.5 & 78.2 $\pm$ 0.9 & 
         \textbf{72.8} $\pm$ 0.8 & \textbf{72.0} $\pm$ 0.8 & \textbf{70.0} $\pm$ 1.0  \\
         Full Graph &  \multicolumn{3}{c}{81.6 $\pm$ 0.6} & \multicolumn{3}{c}{73.1 $\pm$ 1.5}  \\
        \bottomrule
    \end{tabular}
    }
    \vspace{3pt}
    \caption{Accuracy in $\%$ for node classification and different coarsening ratio and models.}
    \label{tab:trainingresults}
\end{table}

\section{Conclusion}

In this paper, we investigated the interactions between graph coarsening and Message-Passing for GNNs. Surprisingly, we found out that even for high-quality coarsenings with strong spectral preservation guarantees, naive (but natural) choices for the propagation matrix on coarsened graphs does not lead to guarantees with respect to message-passing on the original graph. We thus proposed a new message-passing matrix specific to coarsened graphs, which naturally translates spectral preservation to message-passing guarantees, for any coarsening, under some hypotheses relating to the structure of the Laplacian and the original propagation matrix. We then showed that such guarantees extend to GNN, and in particular to the SGC model, such that training on the coarsened graph is provably close to training on the original one. 

There are many outlooks to this work. Concerning the coarsening procedure itself, which was not the focus of this paper, new coarsening algorithms could emerge from our theory, e.g. by instantiating an optimization problem with diverse regularization terms stemming from our theoretical bounds. The scalability of such coarsening algorithms is also an important topic for future work. From a theoretical point of view, a crucial point is the interaction between non-linear activation functions and the low-frequency vectors in a graph (Assumption \ref{ass:sigma}). We focused on the SGC model here, but a more in-depth study of particular graph models (e.g. random geometric graphs) could shed light on this complex phenomenon, which we believe to be a major path for future work.

\newpage
\bibliography{refs, nico_refs}

\newpage
\appendix

\section{Proofs}\label{app:proof}

We start by a small useful lemma.

\begin{lemma}\label{lem:kernel}
    \label{lem:commute}
    For all $\text{ker}(L)$-preserving matrices $M$, we have $\|Mx\|_L \leq \|M\|_L \|x\|_L$.
\end{lemma}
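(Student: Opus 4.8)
The plan is to push everything through the linear map $x \mapsto L^{\frac12} x$, which turns the semi-norm $\|\cdot\|_L$ into an honest Euclidean norm, and then to use the $\textup{ker}(L)$-preserving hypothesis to absorb the discrepancy between $L^{-\frac12} L^{\frac12}$ and the identity.

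First I would record the two elementary identities $\|Mx\|_L = \|L^{\frac12} M x\|$ and $\|x\|_L = \|L^{\frac12} x\|$, both immediate from $L = L^{\frac12} L^{\frac12}$ with $L^{\frac12}$ symmetric. Writing $y = L^{\frac12} x$, so that $\|x\|_L = \|y\|$, the claimed inequality becomes $\|L^{\frac12} M x\| \le \|L^{\frac12} M L^{-\frac12}\|\,\|y\|$; since $\|L^{\frac12} M L^{-\frac12}\| = \|M\|_L$ by definition, the operator-norm bound $\|(L^{\frac12} M L^{-\frac12}) y\| \le \|L^{\frac12} M L^{-\frac12}\|\,\|y\|$ will give the result as soon as I establish the identity $L^{\frac12} M x = (L^{\frac12} M L^{-\frac12})\, y = L^{\frac12} M L^{-\frac12} L^{\frac12} x$.

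The key step is to analyze $L^{-\frac12} L^{\frac12}$. Because $L^{-\frac12} = (L^+)^{\frac12}$ shares the eigenbasis of $L$ and inverts precisely the nonzero square-root eigenvalues, $L^{-\frac12} L^{\frac12}$ is the orthogonal projector onto $\mathrm{range}(L) = \textup{ker}(L)^\perp$; hence $L^{-\frac12} L^{\frac12} x = x - P_{\textup{ker}(L)} x$, where $P_{\textup{ker}(L)}$ is the orthogonal projection onto $\textup{ker}(L)$. Substituting, $L^{\frac12} M L^{-\frac12} L^{\frac12} x = L^{\frac12} M x - L^{\frac12} M P_{\textup{ker}(L)} x$. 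Now the hypothesis that $M$ is $\textup{ker}(L)$-preserving yields $M P_{\textup{ker}(L)} x \in \textup{ker}(L)$, and since $\textup{ker}(L^{\frac12}) = \textup{ker}(L)$ the last term vanishes, leaving exactly $L^{\frac12} M x$ as required.

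The only delicate point—and precisely the reason the $\textup{ker}(L)$-preserving assumption is needed—is that $L^{-\frac12} L^{\frac12} \ne I_N$ whenever $L$ is singular; without controlling the $\textup{ker}(L)$-component of $x$ the identity $L^{\frac12} M x = L^{\frac12} M L^{-\frac12} L^{\frac12} x$ simply fails. I expect no genuine obstacle beyond carefully bookkeeping this projector: everything else is routine functional calculus for the symmetric p.s.d. matrix $L$.
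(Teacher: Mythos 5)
Your proof is correct and follows essentially the same route as the paper's: both arguments reduce to the observation that $L^{-\frac12}L^{\frac12}$ is the orthogonal projector onto $\textup{ker}(L)^\perp$ and that the $\textup{ker}(L)$-preserving hypothesis kills the kernel component after applying $L^{\frac12}M$. The only (cosmetic) difference is that you establish the exact identity $L^{\frac12}Mx=(L^{\frac12}ML^{-\frac12})L^{\frac12}x$ directly, whereas the paper first decomposes $x=u+v$ with $u\in\textup{ker}(L)$ and uses the triangle inequality together with $\|Mu\|_L=0$.
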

\begin{proof}
    Take the orthogonal decomposition $x = u + v$ where $u \in \text{ker}(L)$ and $v \in \text{ker}(L)^\perp$. Then, since $\|x\|_L = \|v\|_L$, $L^{-\frac12}L^{\frac12}v = v$ and $\|Mu\|_L=0$,
    \begin{align*}
        \|Mx\|_L \leq \|Mv\|_L = \|L^{\frac12} ML^{-\frac12}L^{\frac12}v\|  \leq \|M\|_L\|v\|_L
    \end{align*}
\end{proof}

\subsection{Proof of Proposition \ref{prop:Ac}}

\begin{proof}
    The fact that $L_c = (Q^+)L Q^+$ in this particular case is done by direct computation. It results that $\lVert x_c\rVert_{\Lpropag_c} = \lVert {\Lpropag}^{\frac12}\Pi x \rVert$ and
    $$ | \, \lVert x\rVert_{\Lpropag} - \lVert x_c\rVert_{\Lpropag_c}\,| \leq  \lVert {\Lpropag}^{\frac12} x -  {\Lpropag}^{\frac12}\Pi x \rVert  = \lVert x - \Pi x \rVert_{\Lpropag} \leq \RSAconst \lVert x \rVert_{\Lpropag}$$
    by definition of $\RSAconst$.
\end{proof}

\subsection{Proof of Theorem \ref{thm:propag}}

\begin{proof}
    Since $x \in \preservedspace$ and $S$ is $\preservedspace$-preserving, we have
    $$
    \|\Pi^\perp x\|_L \leq \RSAconst \|x\|_L
    $$
    where $\Pi^\perp = I_N - \Pi$, and similarly for $Sx$. Moreover, under Assumption~\ref{ass:kernel}, both $\Pi$ and $S$ are $\text{ker}(L)$-preserving, such that $\|\Pi S x\|_L \leq \|\Pi S\|_L \|x\|_L$ for all $x$.
    Then
    \begin{align*}
        \lVert \matrixpropag x - Q^+ \newmatrixpropag x_c \rVert_{\Lpropag} &= \lVert \matrixpropag x - \pinormalized \matrixpropag \pinormalized x \rVert_{\Lpropag}  \\
        &= \lVert \matrixpropag x - \pinormalized \matrixpropag x + \pinormalized \matrixpropag x - \pinormalized \matrixpropag \pinormalized x \rVert_{\Lpropag}  \\
        &= \lVert  \pinormalized^{\perp} \matrixpropag x + \pinormalized \matrixpropag \pinormalized^{\perp} x \rVert_{\Lpropag} \\
        &\leq \lVert  \pinormalized^{\perp} \matrixpropag x \rVert_{\Lpropag}  + \lVert \pinormalized \matrixpropag \pinormalized^{\perp} x \rVert_{\Lpropag} \\
        &\leq \RSAconst \lVert \matrixpropag x \rVert_{\Lpropag} + \lVert \pinormalized \matrixpropag \rVert_{\Lpropag}  \lVert \pinormalized^{\perp}x \rVert_{\Lpropag}    \\
        &\leq \RSAconst \lVert \matrixpropag x \rVert_{\Lpropag} + \RSAconst \lVert \pinormalized \matrixpropag \rVert_{\Lpropag}  \lVert x \rVert_{\Lpropag}
        = \RSAconst \lVert x \rVert_{\Lpropag} \left( C_S + C_\Pi \right)
    \end{align*}   
\end{proof}

\subsection{Proof of Theorem \ref{thm:SGC}}\label{app:proofSGC}

Recall that the GNN is such that $H^0 = X$, and
$$
H^l = \sigma(S H^{l-1} \theta_l) \in \mathbb{R}^{N \times d_\ell}, \quad \Phi_\theta(X,S) = H^k \in \mathbb{R}^N
$$
Similarly, for the GNN on coarsened graph we denote by $H^0_c = X_c$ and its layers
$$
H_c^l = \sigma(\newmatrixpropag H_c^{l-1} \theta_l) \in \mathbb{R}^{n \times d_\ell}, \quad \Phi_\theta(X_c,\newmatrixpropag) = H^k_c \in \mathbb{R}^N
$$

For some set of parameters $\theta$ of a GNN, we define
$$
C_{\theta, l} = \sup_i \sum_j |\theta^l_{ij}|,\qquad  \bar C_{\theta, l} = \prod_{p=1}^l C_{\theta, p}
$$

We start with a small lemma.
\begin{lemma}\label{lem:B}
    Define
    \begin{equation}
        B_l =B_l(X) := \sum_i \| H^l_{:,i}\|_L
    \end{equation}
    Then we have
    \begin{equation}
        B_l \leq \bar C_{\theta,l} C_S^l C_\sigma^l \|X\|_{:,L}
    \end{equation}
\end{lemma}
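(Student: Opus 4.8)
The plan is to proceed by induction on the layer index $l$, establishing the one-step recursion $B_l \leq C_\sigma C_S C_{\theta,l} B_{l-1}$ and then unrolling it. The base case $l=0$ holds with equality: since $H^0 = X$ we have $B_0 = \sum_i \|X_{:,i}\|_L = \|X\|_{:,L}$, while the empty-product convention gives $\bar C_{\theta,0} = 1$ and $C_S^0 = C_\sigma^0 = 1$, so the claimed bound reduces to $B_0 \leq \|X\|_{:,L}$.

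For the inductive step, the key idea is to work column by column. Writing $H^l = \sigma(S H^{l-1}\theta_l)$ and using that $\sigma$ acts entrywise, the $i$-th column of $H^l$ is $\sigma\big((S H^{l-1}\theta_l)_{:,i}\big)$, and the $i$-th column of $S H^{l-1}\theta_l$ expands as $S\sum_j H^{l-1}_{:,j}(\theta_l)_{ji}$. I would then bound $\|H^l_{:,i}\|_L$ by a chain of three estimates: first the Lipschitz property of $\sigma$ with respect to $\|\cdot\|_L$ (Assumption \ref{ass:sigma} ii) together with $\sigma(0)=0$, giving $\|H^l_{:,i}\|_L \leq C_\sigma \|(S H^{l-1}\theta_l)_{:,i}\|_L$; then the triangle inequality over $j$ combined with the fact that $S$ is $\textup{ker}(L)$-preserving (Assumption \ref{ass:kernel}), so that Lemma \ref{lem:kernel} yields $\|S H^{l-1}_{:,j}\|_L \leq C_S \|H^{l-1}_{:,j}\|_L$; and finally pulling the scalars $|(\theta_l)_{ji}|$ out. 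Together these give $\|H^l_{:,i}\|_L \leq C_\sigma C_S \sum_j |(\theta_l)_{ji}| \|H^{l-1}_{:,j}\|_L$.

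Summing this over the output coordinate $i$ and exchanging the order of summation, the factor $\sum_i |(\theta_l)_{ji}|$ appears, which is the $j$-th absolute row sum of $\theta_l$ and is therefore bounded by $C_{\theta,l} = \sup_i \sum_j |\theta^l_{ij}|$. This produces the recursion $B_l \leq C_\sigma C_S C_{\theta,l} \sum_j \|H^{l-1}_{:,j}\|_L = C_\sigma C_S C_{\theta,l} B_{l-1}$. Iterating from the base case gives $B_l \leq (C_\sigma C_S)^l \big(\prod_{p=1}^l C_{\theta,p}\big) \|X\|_{:,L} = \bar C_{\theta,l} C_S^l C_\sigma^l \|X\|_{:,L}$, which is exactly the claim.

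The only genuinely delicate points here are bookkeeping rather than conceptual. One must check that the index convention in $C_{\theta,l}$ matches the row sum that actually arises after summing over the output coordinates, and that the entrywise action of $\sigma$ on a matrix indeed corresponds to applying $\sigma$ columnwise to each $H^l_{:,i}$, so that the $\|\cdot\|_L$ Lipschitz bound applies per column. The conversion of the Lipschitz inequality into the homogeneous bound $\|\sigma(y)\|_L \leq C_\sigma \|y\|_L$ relies on $\sigma(0)=0$, which I would flag explicitly; this is harmless since both the identity and ReLU satisfy it, and the identity is the relevant case for Assumption \ref{ass:sigma}.
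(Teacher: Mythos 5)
Your proof is correct and follows essentially the same route as the paper's: the one-step recursion $B_l \leq C_\sigma C_{\theta,l} C_S B_{l-1}$ obtained by applying the Lipschitz bound on $\sigma$, expanding the columns of $SH^{l-1}\theta_l$, invoking Lemma \ref{lem:kernel} for the $\textup{ker}(L)$-preserving matrix $S$, and absorbing the weight sums into $C_{\theta,l}$, then unrolling from $B_0 = \|X\|_{:,L}$. Your explicit remark that passing from the Lipschitz inequality to $\|\sigma(y)\|_L \leq C_\sigma\|y\|_L$ uses $\sigma(0)=0$ is a small point of care the paper glosses over, but it does not change the argument.
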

\begin{proof}
From assumption \ref{ass:sigma} we have $\|\sigma(x)\|_L \leq C_\sigma\|x\|_L$. Then, since $S$ is $\text{ker}(L)$-preserving from Assumption \ref{ass:kernel}, by Lemma \ref{lem:kernel}
    \begin{align*}
        \sum_i \| H^l_{:,i}\|_L &= \sum_i \| \sigma(S H^{l-1} \theta^l_{:,i})\|_L \leq C_\sigma\sum_i \| S H^{l-1} \theta^l_{:,i}\|_L \\
        &\leq C_\sigma(\sup_j \sum_i |\theta^l_{ji}|) \sum_j\| S H^{l-1}_{:,j}\|_L \leq C_\sigma C_{\theta,l} C_S B_{l-1}
    \end{align*}
    Since $B_0 = \|X\|_{:,L}$, we obtain the result
\end{proof}

\begin{proof}

We start with classical risk bounding in machine learning
\begin{align*}
    J(\Phi_{\theta_c}(X,S)) - J(\Phi_{\theta^\star}(X,S)) &= J(\Phi_{\theta_c}(X,S)) - J(Q^+\Phi_{\theta_c}(X_c,\newmatrixpropag)) \\
    &\quad+ J(Q^+\Phi_{\theta_c}(X_c,\newmatrixpropag)) - J(Q^+\Phi_{\theta^\star}(X_c,\newmatrixpropag))\\
    &\quad + J(Q^+\Phi_{\theta^\star}(X_c,\newmatrixpropag))
    - J(\Phi_{\theta^\star}(X,S)) \\
    &\leq 2 \sup_{\theta \in \Theta} \lvert
    J(\Phi_{\theta}(X,S)) - J(Q^+\Phi_{\theta}(X_c,\newmatrixpropag)) \rvert
\end{align*}
since $\theta_c$ minimizes $J(Q^+\Phi_{\theta}(X_c,\newmatrixpropag))$.
For all $\theta$, by Assumption~\ref{ass:JLipschitz}, we have
\begin{equation}
    \lvert  J(\Phi_{\theta}(X,S)) - J(Q^+\Phi_{\theta}(X_c,\newmatrixpropag)) \rvert \leq C_J \lVert
    \Phi_{\theta}(X,S) - Q^+\Phi_{\theta}(X_c,\newmatrixpropag) \rVert_L
\end{equation}

We will prove a recurrence bound on 
$$E_l := \sum_i \lVert H^l_{:,i} - Q^+ (H_c^l)_{:,i}\rVert_L $$

From Assumption \ref{ass:sigma},
\begin{align*}
    E_l &= \sum_i \lVert \sigma( S H^{l-1} (\theta_l)_{:,i} ) - Q^+ \sigma( \newmatrixpropag H_c^{l-1} (\theta_l)_{:,i} ) \rVert_L \\
    &= \sum_i \lVert \sigma( S H^{l-1} (\theta_l)_{:,i} ) - \sigma(Q^+ \newmatrixpropag H_c^{l-1} (\theta_l)_{:,i} ) \rVert_L \\
    &\leq C_\sigma \sum_i \lVert S H^{l-1} (\theta_l)_{:,i} - Q^+  \newmatrixpropag H_c^{l-1} (\theta_l)_{:,i} \rVert_L \\
    &\leq C_\sigma \sum_j \left( \sum_i |(\theta_l)_{ji}| \right) \lVert S H^{l-1}_{:,j}  - Q^+  \newmatrixpropag (H_c^{l-1})_{:,j} \rVert_L
\end{align*}
We then write
\begin{align*}
    \lVert S H^{l-1}_{:,j}  - Q^+  \newmatrixpropag (H_c^{l-1})_{:,j} \rVert_L &\leq \lVert S H^{l-1}_{:,j} - Q^+  \newmatrixpropag Q H^{l-1}_{:,j} \rVert_L + \lVert Q^+  \newmatrixpropag Q H^{l-1}_{:,j} - Q^+  \newmatrixpropag (H_c^{l-1})_{:,j} \rVert_L
\end{align*}

Then note that, since both $S$ and $\sigma$ are $\preservedspace$-preserving, for all $l,i$ we have that $(H^l)_{:,i} \in \preservedspace$. We can thus apply Theorem \ref{thm:propag} to the first term:
$$
\lVert S H^{l-1}_{:,j} - Q^+ \newmatrixpropag Q H^{l-1}_{:,j} \rVert_L \leq \RSAconst (C_S + C_\Pi) \lVert H^{l-1}_{:,j} \rVert_L
$$

The second term is $0$ when $l=1$ since $H_c^0 = Q H^0$. Otherwise, using $QQ^+ = I_n$, and since under Assumption \ref{ass:kernel} both $S$ and $\Pi$ are $\text{ker}(L)$-preserving, applying Lemma \ref{lem:kernel}:
\begin{align*}
    \lVert Q^+ \newmatrixpropag Q H^{l-1}_{:,j} - Q^+  \newmatrixpropag (H_c^{l-1})_{:,j} \rVert_L &= \lVert Q^+ \newmatrixpropag Q( H^{l-1}_{:,j} - Q^+ (H_c^{l-1})_{:,j}) \rVert_L \\
    &= \lVert \Pi S \Pi ( H^{l-1}_{:,j} - Q^+ (H_c^{l-1})_{:,j}) \rVert_L \\
    &\leq \lVert \Pi S \Pi \rVert_L \lVert  H^{l-1}_{:,j} - Q^+ (H_c^{l-1})_{:,j} \rVert_L
\end{align*}

At the end of the day, we obtain
\begin{align*}
    E_l &\leq C_\sigma C_{\theta,l}(\RSAconst (C_S + C_\Pi) B_{l-1} + \bar C_{\Pi} E_{l-1}) \\
    &\leq C_\sigma^l \bar C_{\theta,l} C_S^{l-1} (C_S + C_\Pi) \RSAconst \|X\|_{:,L} + C_\sigma C_{\theta,l}\bar C_{\Pi}E_{l-1}
\end{align*}
using Lemma \ref{lem:B}, and $E_1 \leq \RSAconst C_\sigma C_{\theta,1} (C_S + C_\Pi) \lVert X \rVert_{:,L}$. 
We recognize a recursion of the form $u_n \leq a_n c + b_n u_{n-1}$, which leads to $u_n \leq \sum_{p=2}^n a_p \prod_{i=p+1}^n b_i + u_1\prod_{i=2}^n b_i$, which results in:
\begin{equation}
    E_k \leq \RSAconst \|X\|_{:,L} C_\sigma^k \bar C_{\theta, k}, (C_S + C_\Pi) \sum_{l=1}^k \bar C_\Pi^{k-l} C_S^{l-1}
\end{equation}

This concludes the proof with $C_\Theta = \max_{\theta \in \Theta} \bar C_{\theta, k}$.

\end{proof}

\section{Coarsening algorithm and experimental details}

\subsection{Adaptation of Loukas Algorithm}\label{app:loukasalg}
You can find below the pseudo-code of Loukas algorithm. This algorithm works by greedy selection of \emph{contraction sets} (see below) according to some cost, merging the corresponding nodes, and iterate. The main modification is to replace the combinatorial Laplacian in the Loukas code by any Laplacian $L = f_L(A)$, and to update the adjacency matrix according to \eqref{eq:Ac} at each iteration and recompute $L$, instead of directly updating $L$ as in the combinatorial Laplacian case. Note that we also remove the diagonal of $A_c$ at each iteration, as we find that it produces better results. The output of the algorithm is the resulting coarsening $Q$, as well as $\newmatrixpropag = Q S Q^+$ for our application.

\begin{algorithm}
\caption{Loukas Algorithm}
\label{algo:loukas_adapted}
\begin{algorithmic}[1]
    \REQUIRE Adjacency matrix $A$, Laplacian $\Lpropag =f_L(A) $, propagation matrix $S$, a coarsening ratio $r$ , preserved space $\preservedspace$, maximum number of nodes merged at one coarsening step : $n_e$
    \STATE $n_{obj} \gets \operatorname{int}(N - N\times r)$ the number of nodes wanted at the end of the algorithm.
    \STATE compute cost matrix $B_0 \gets VV^T \Lpropag^{-1/2}$ with $V$ an orthonormal basis of $\preservedspace$
    \STATE $Q \gets I_N$
    \WHILE{ $n \geq n_{obj} $ }
        \STATE Make one coarsening STEP $ l$
        \STATE Create candidate contraction sets.
        \STATE For each contraction $\mathcal{C}$, compute $\operatorname{cost}(\mathcal{C}, B_{l-1}, \Lpropag_{l-1}) = \frac{\lVert \Pi_C B_{l-1}(B^T_{l-1}\Lpropag_{l-1}B_{l-1})^{-1/2}\rVert_{\Lpropag_{\mathcal{C}}}}{|\mathcal{C}| -1}$
        \STATE Sort the list of contraction set by the lowest score
        \STATE Select the lowest scores non overlapping contraction set while the number of nodes merged is inferior to min($n-n_{obj}, n_e $)
        \STATE  Compute  $Q_l$, $Q_l^+$, uniform intermediary coarsening with contraction sets selected
        \STATE $B_{l} \gets Q_l B_{l-1}$ 
        \STATE $Q \gets Q_l Q$
        \STATE $A_l \gets (Q_l^+)^\top A_{l-1} Q_l^+ - \text{diag}((Q_l^+)^\top A_{l-1} Q_l^+)1_n) $
        \STATE $L_{l-1} = f_L(A_{l-1})$
        \STATE $n \gets \operatorname{min}(n-n_{obj}, n_e )$
    \ENDWHILE
    \STATE IF uniform coarsening THEN $Q \gets \operatorname{row-normalize}(Q_l Q)$ 
    \STATE Compute $\newmatrixpropag = Q\matrixpropag Q^+$
    \RETURN $Q, \newmatrixpropag$
\end{algorithmic}
\end{algorithm}

The terms $\Pi_\mathcal{C}$ and $L_\mathcal{C}$ are some specific projection of the contraction set, their explicit definition can be find in Loukas work \cite{loukas2019graph}. We did not modify them here and leave their eventual adaptation for future work.

\paragraph{Enforcing the iterative/greedy aspect} In our adaptation we also add a parameter $n_e$ to limit the number of nodes contracted at each coarsening step. In one coarsening step, when a contraction set $\mathcal{C}$ is selected, we merge $|\mathcal{C}|$ nodes. In practice Loukas proposed in its implementation to force $n_e= \infty$ and coarsen the graph in one single iteration.
We observed empirically better results by diminishing $n_e$ and combining it with enforcing the uniform coarsening (Appendix \ref{app:hyperMP}). 

\paragraph{Candidate contraction Set.} Candidate contractions sets come in two main flavors: they can be each two nodes linked by edges, or the neighborhood of each nodes (so-called "variation edges" and "variation neighborhood" versions). In practice, as the neighborhood are quite big in our graphs, it is not very convenient for small coarsening ratio and give generally poor results. We will use mainly the edges set as  candidate contraction sets and adjust the parameter $n_e$ to control the greedy aspect of this algorithm.

\paragraph{Uniform Coarsening}At each coarsening step, in Loukas algorithm $Q_l$ is uniform by construction. Nonetheless the product of uniform coarsening is not necessarily an uniform coarsening. Then, we propose an option to force the uniform distribution in the super-nodes in the Loukas algorithm by normalize the non zero values of each line of the final coarsening matrix $Q$. We observe that uniform coarsening gives better results for $\RSAconst$, and works better for our message passing guarantees. See Appendix \ref{app:hyperMP}.

\subsection{Discussion on Training procedure}\label{app:training}

The pseudocode of our training procedure is detailed in Algo.~\ref{algo:training_pr}.

 \begin{algorithm}
    \caption{Training Procedure}
    \label{algo:training_pr}
    \begin{algorithmic}[1]
        \REQUIRE Adjacency $A$, node features $X$, desired propagation matrix $\matrixpropag$, preserved space $\preservedspace$, Laplacian $\Lpropag$, a coarsening ratio $r$
        \STATE $Q$, $\newmatrixpropag$ $\gets$ Coarsening-algorithm$(A,L, S, r, \preservedspace)$
        \STATE $X_c \gets Q X$
        \STATE Initialize model (SGC or GCNconv)
        \FOR{$N_{epochs}$ iterations}
            \STATE compute coarsened prediction $\Phi_\theta(\newmatrixpropag,X_c)$
            \STATE uplift the predictions : $Q^+\Phi_\theta(\newmatrixpropag,X_c)$
            \STATE compute the cross entropy loss $J(Q^+\Phi_\theta(\newmatrixpropag,X_c))$
            \STATE Backpropagate the gradient
            \STATE Update $\theta$
        \ENDFOR
    \end{algorithmic}
\end{algorithm}

Note that it is different from the procedure of \cite{Huang2021} which computes labels for the super-nodes (using the majority label in the coarsening cluster) and do not use the uplifting matrix $Q^+$. We find this procedure to be less amenable to semi-supervised learning, as super-nodes may merge training and testing nodes, and prefer to uplift the output of the GNN in the original graph instead. Additionally, this preserves the theoretical guarantees of Sec.~\ref{sec:contrib}. Our procedure might be slightly lower but we find the uplifting operation to be of negligible cost compared to actual backpropagation.

\subsection{Presentation of dataset}\label{app::dataset}
\paragraph{Synthetic Dataset} Random geometric graph is built by sampling nodes with coordinates in $[0,1]^2$ and connecting them if their distance is under a given threshold. For the experiment on illustrating the message passing preservation guarantees, we sample 1000 nodes with a threshold of $0.05$ (fig \ref{fig:syn-graph} ).
\begin{figure}
    \centering
    \includegraphics{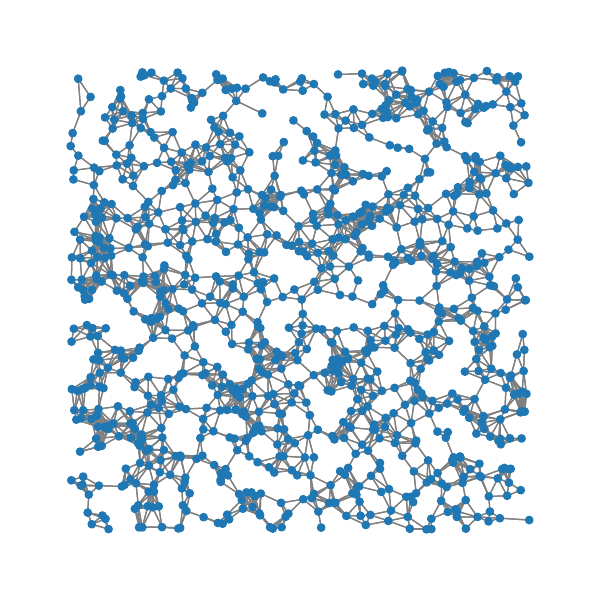}
    \caption{Example of a random Geometric graph }
    \label{fig:syn-graph}
\end{figure}

\paragraph{Real World datasets}
 
\begin{table}[ht]
\centering
\caption{Characteristics of Cora and CiteSeer Datasets}
\scalebox{0.8}{
\begin{tabular}{lccccc}
\toprule
\textbf{Dataset} & \textbf{\# Nodes} & \textbf{\# Edges}  & \textbf{\# Train Nodes} & \textbf{\# Val Nodes} & \textbf{\# Test Nodes} \\
\midrule
Cora & 2,708 & 10,556 & 140 & 500 & 1,000   \\
Cora PCC & 2,485 & 10,138  & 122 & 459 & 915 \\
\midrule
Citeseer & 3,327 & 9,104 & 120 & 500 & 1,000   \\
Citeseer PCC & 2,120 & 7,358   & 80 & 328 & 663 \\

\bottomrule
\end{tabular}
}
\end{table}

We restrict the well knwon Cora and Citeseer to their principal connected component(PCC) as it more compatible with coarsening as preprocessing. Indeed, the loukas algorithm tend to coarsen first the smallest connected components before going to the biggest which leads to poor results for small coarsening ratio. However working with this reduced graph make the comparison with other works more difficult as it is not the same training and evaluating dataset.

\subsection{Discussion of hyperparameters and additional experiments}\label{app:hyperMP}

In the following section, we will use two different view of the same plot, to focus on different parts. We use the log-log scale (fig \ref{fig:loglogscaleuniform}) to put the focus on low coarsening ratio and on the upper bound. 
We use the linear scale (fig \ref{fig:linearlinearscaleuniform}) to compare more precisely our propagation matrix with $S^{diag}_c$ and $S_c$ for higher coarsening ratio.

\begin{figure}[htbp]
  \centering
  \begin{minipage}[b]{0.45\textwidth}
    \centering
    \includegraphics[width=\textwidth]{figures/propagation_log_log.pdf}
    \caption{Log Log scale}
    \label{fig:loglogscaleuniform}
  \end{minipage}
  \hfill
  \begin{minipage}[b]{0.45\textwidth}
    \centering
    \includegraphics[width=\textwidth]{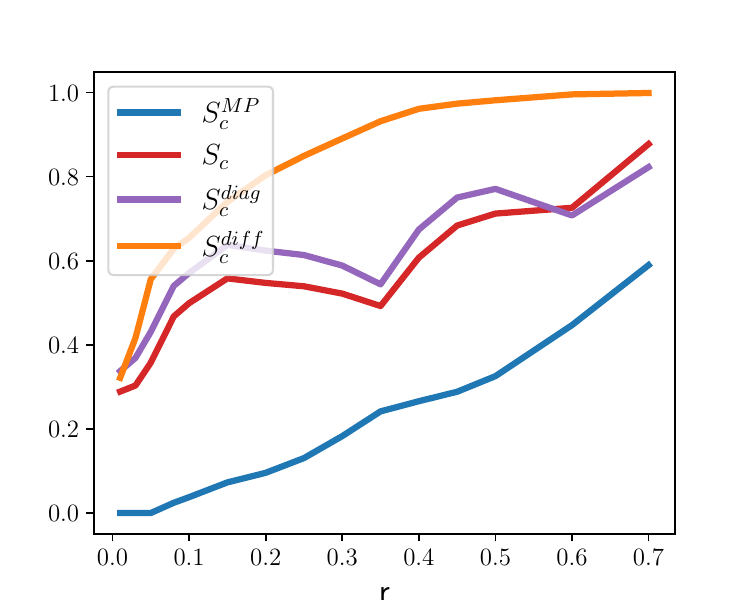}
    \caption{Linear Linear Scale}
    \label{fig:linearlinearscaleuniform}
  \end{minipage}
  \caption{Uniform coarsening with $n_e = 5N/100$ and Normalized Laplacian}

\end{figure}

\paragraph{Uniform coarsening}We observe that forcing uniform coarsening gives better $\RSAconst$ and thus better message passing guarantees . It is shown in the figure \ref{fig:nonuniform} for  $n_e = 5N/100$ with N the number of Nodes of the graph (1000 here). 

\begin{figure}[htbp]
  \centering
  \begin{minipage}[b]{0.45\textwidth}
    \centering
    \includegraphics[width=\textwidth]{figures/propagation_linear_linear.pdf}
    \subcaption{uniform coarsening}
    \label{fig:uniform}
  \end{minipage}
  \hfill
  \begin{minipage}[b]{0.45\textwidth}
    \centering
    \includegraphics[width=\textwidth]{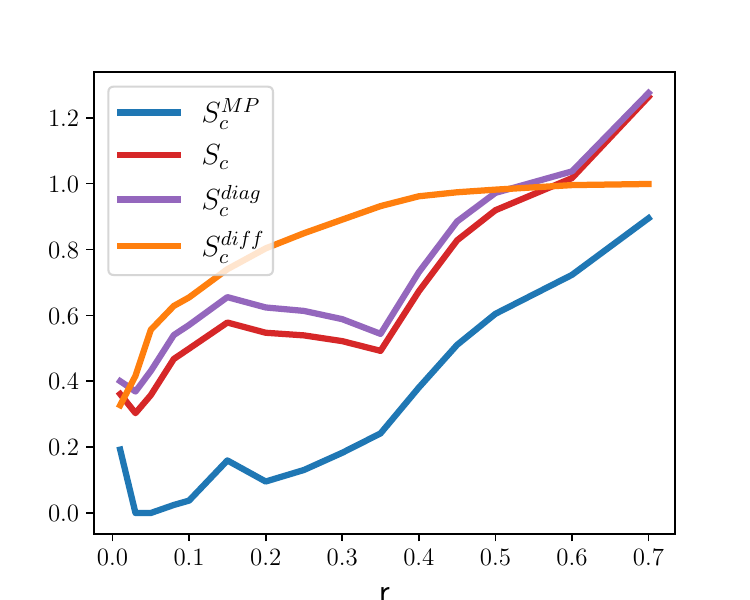}
    \subcaption{Non uniform coarsening}
    \label{fig:nonuniform}
  \end{minipage}
  \caption{Coarsening with $n_e = 5N/100$ and Normalized Laplacian}
\end{figure}

\paragraph{Bounding $n_e$.} For high coarsening ratio, we observe limits of the variation edges defined as Loukas with $n_e \rightarrow \infty$ as it gives bigger $\RSAconst$ and thus worse curve for our propagation matrix in the coarsened graph (fig \ref{fig:neinfty}).
\begin{figure}[htbp]
  \centering
  \begin{minipage}[b]{0.45\textwidth}
    \centering
    \includegraphics[width=\textwidth]{figures/propagation_linear_linear.pdf}
    \caption{ $n_e = 5N/100$}
    \label{fig:nebound}
  \end{minipage}
  \hfill
  \begin{minipage}[b]{0.45\textwidth}
    \centering
    \includegraphics[width=\textwidth]{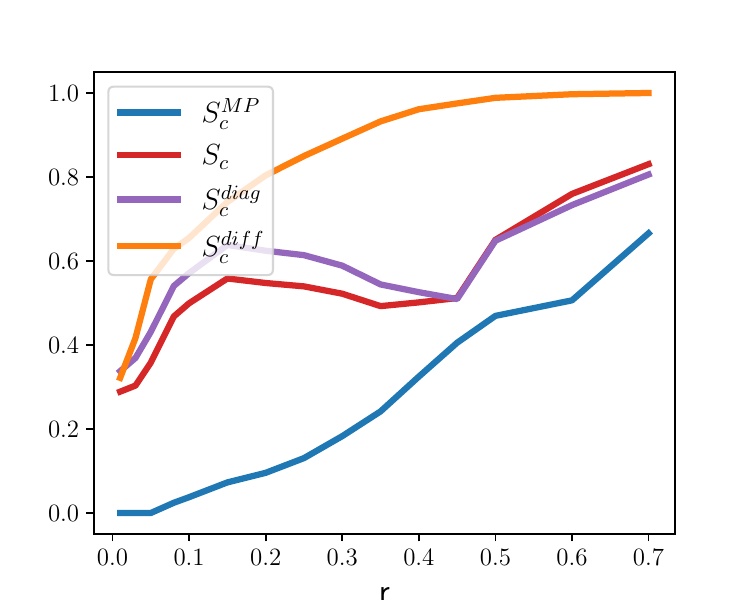}
    \caption{ $n_e \rightarrow \infty$}
    \label{fig:neinfty}
  \end{minipage}
  \caption{Uniform coarsening for Normalized Laplacian}
\end{figure}

\subsection{Hyper-parameters for Table \ref{tab:trainingresults}}\label{app::hyperparameters}

For all experiments, we preserve $K$ eigenvectors of the normalized Laplacian defined as $\Lpropag = I_N(1+\delta) - \matrixpropag$ with  $\delta = 0.001$ and $K = 10\%N$ where $N$ is the number of nodes in the original graph. We apply our adapted version of Loukas coarsening algorithm with $n_e = 5\%N$ for SGC Cora, SGC Citeseer and GCN citeseer and $n_e \rightarrow \infty$ for GCN Cora (variation edges as defined by Loukas). For SGC cora and SGC Citeseer we make 6 propagations as preprocessing for the features. For GCN Cora and Citeseer we use 2 convolationnal layer with a hidden dimension of $16$. For all experiments we use an Adam Optimizer wit a learning rate of $0.05$ and a weight decay of $0.01$.


\end{document}